\documentclass{article}

\usepackage{arxiv}

\usepackage[utf8]{inputenc} 
\usepackage[T1]{fontenc}    
\usepackage{hyperref}       
\usepackage{url}            
\usepackage{booktabs}       
\usepackage{amsfonts}       
\usepackage{microtype}      

\title{Decision Variance in Risk-Averse Online Learning}

\author{
Sattar Vakili, ~Alexis Boukouvalas\\
Prowler.io\\
Cambridge, UK \\
{\tt\small \{sattar,alexis\}@prowler.io}\\
\And
Qing Zhao\\
School of Electrical and Computer Engineering\\ Cornell University\\ 
Ithaca, NY, USA\\
{\tt\small qz16@cornell.edu}\\
}


\newtheorem{theorem}{Theorem}
\newtheorem{lemma}{Lemma}

\newtheorem{proof}{Proof}

\def\nn{\nonumber}

\def\MVL{\mbox{\footnotesize MV-LCB}}
\def\CBE{\mbox{\footnotesize CB-AE}}
\def\MVFL{\mbox{\footnotesize MV-FL}}

\def\mubar{\bar{\mu}}
\def\F{\mathcal{F}}
\def\Or{\mathcal{O}}
\def\X{\mathcal{X}}

\def\E{\mathbb{E}}
\def\R{\mathbb{R}}
\def\N{\mathbb{N}}
\def\I{\mathbb{I}}
\def\nn{\nonumber}
\def\l{\lceil}
\def\r{\rceil}
\def\argmin{\texttt{argmin}}

\def\KL{\mathrm{KL}}

\def\sigmabar{\bar{\sigma}^2}
\def\K{\mathcal{K}}

\def\Gahat{\widehat{\Gamma}}

\def\MV{\mbox{\footnotesize MV}}
\def\MVbar{\bar{\texttt{MV}}}
\def\Reg{R_{\pi}}
\def\Sumnotkstar{\sum_{k\in[K]\setminus k^*}}



\usepackage{amsmath}
\usepackage{stackengine}
\usepackage{amssymb}
\usepackage{algorithm}
\usepackage{algpseudocode}
\usepackage{color}
\usepackage{graphicx}
\usepackage{subfigure}

\begin{document}
\maketitle

\begin{abstract}
Online learning has traditionally focused on the expected rewards. In this paper, a risk-averse online learning problem under the performance measure of the mean-variance of the rewards is studied. Both the bandit and full information settings are considered. The performance of several existing policies is analyzed, and new fundamental limitations on risk-averse learning is established. In particular, it is shown that although a logarithmic distribution-dependent regret in time $T$ is achievable (similar to the risk-neutral problem), the worst-case (i.e. minimax) regret is lower bounded by $\Omega(T)$ (in contrast to the $\Omega(\sqrt{T})$ lower bound in the risk-neutral problem). This sharp difference from the risk-neutral counterpart is caused by the the variance in the player's decisions, which, while absent in the regret under the expected reward criterion, contributes to excess mean-variance due to the non-linearity of this risk measure. The role of the decision variance in regret performance reflects a risk-averse player's desire for robust decisions and outcomes.  
\end{abstract}


\keywords{Online Learning \and Multi Armed Bandit \and Full Information \and Risk Averse}

\section{Introduction}

\subsection{Risk-Neutral Online Learning}

Consider an online decision making problem with a finite set $[K]=\{1,2,\dots,K\}$ of actions and a learner who chooses the actions sequentially. Each chosen action $k\in[K]$ at time $t$  results in a random reward $X_{k,t}$ drawn independently over time from an unknown distribution. 

Classic formulations of the problem target at the \emph{expected} cumulative reward over a horizon of length $T$. A commonly adopted performance measure is regret defined as the cumulative reward loss in expectation as compared to the optimal policy with the knowledge of the reward distribution under each action. A sublinear regret order in $T$ implies that not knowing the reward distributions results in diminishing reward loss per play, and the specific regret order gives a finer measure on the efficiency of the learning policies. 

We are yet to specify the observations available to the learner for decision-making at each time. Two common feedback models have been considered in the literature: the full-information setting and the bandit setting (see, for example, \cite{Price}). In the former, after taking an action $X_{k,t}$ at time $t$, the random rewards of all $K$ actions are revealed to the learner. This feedback model applies to applications such as stock investment and portfolio management. In the latter, only the reward of the chosen action $k$ is revealed. This model arises naturally from applications such as online ads placement where the payoff of a particular action is only observed after the action is tried out. This coupling between information gathering and reward earning under the bandit setting leads to the exploration-exploitation tradeoff that significantly complicates the problem. 

When comparing learning policies in their regret performance, there are two approaches to handling the bias toward specific reward distributions (consider, for example, a policy that always chooses action $1$; it works perfectly when this action does lead to the highest expected reward). In the first approach, only policies offering uniformly good performance across all reward distributions (in a certain class) are admissible. These admissible policies are then compared under each possible set of reward distributions. Such a distribution-dependent regret typically depends on certain statistics of the underlying reward distributions
such as the Kullback-Leibler (KL) divergence  and the gap in the mean values. In the second approach, all policies are admissible. The performance of a policy, however, is taken as the worst among all reward distributions. The regret (referred to as the worst-case or minimax regret) of a policy is thus independent of specific distributions, and policies are compared at different reward distributions, i.e., their specific worst scenarios. It is known that in the full-information setting, the distribution-dependent regret and the minimax regret are lower bounded by $\Omega(\log K)$ \cite{Mourtada} and $\Omega(\sqrt T)$ \cite{PLG}, respectively, with order-optimal policies given in \cite{War14,PLG}. In the bandit setting, the distribution-dependent regret and the minimax regret are lower bounded by $\Omega(K\log T)$ \cite{Lai&Robbins85AAM} and $\Omega(\sqrt{KT})$ \cite{BR,NonSto}, respectively, with order-optimal policies given in, for example, \cite{NonSto,Auer&etal02ML,DSEE}.

\subsection{Risk-Averse Online Learning and Main Results}

In this paper, we consider risk-averse online learning. We adopt Markowitz's mean-variance measure, a common risk measure especially for modern portfolio selection \cite{Finance}. The mean-variance of a random variable $X$ is defined as
\begin{equation}\label{MVD}
\MV(X)=\sigma^2(X)-\lambda\mu(X),
\end{equation}
a linear combination of its mean $\mu(X)$ and variance $\sigma^2(X)$~\cite{Marko}. The parameter $\lambda$ is the risk-tolerance factor. It can be interpreted as the inverse Lagrangian multiplier in the constrained optimization of maximizing the expected return $\mu(X)$ subject to a given variance level.

Let $\{\pi_t\}_{t=1}^T$ denote the sequence of actions chosen by a policy $\pi$ and $X_{\pi_t,t}$ the reward obtained at time $t$ under action $\pi_t$. The objective is to minimize the cumulative risk given by the total mean-variance: 
\begin{eqnarray}\nn
\MV_\pi(T) = \sum_{t=1}^T \MV(X_{\pi_t,t}). 
\end{eqnarray}
The above cumulative mean-variance measure is an extension of the risk measure of a random variable $X$ to a risk measure of a random process $\{X_{\pi_t,t}\}_{t=1}^T$. In particular, the risk constraint on the variance is imposed locally for each time $t$. This is particularly relevant to applications such as clinical trial where the risk in each action (i.e. for each patient) needs to be controlled. 

Similar to the risk-neutral online learning, regret is defined as the excess in cumulative mean-variance in comparison to the optimal policy $\pi^*$ under known reward distributions:
\begin{eqnarray}\nn
\Reg(T) = \MV_\pi(T) - \MV_{\pi*}(T).
\end{eqnarray}

The regret definition in risk-averse online learning is similar to the one in risk-neutral online learning except that the measure of expected value is replaced with the measure of mean-variance. In the risk-neutral setting, due to the linearity of the expectation operator (and by Wald first identity), regret can be expressed as a weighted sum of the expected number of times suboptimal actions are chosen where the weights are the suboptimality gap of the corresponding action. 
In the risk-averse setting, however, due to the non-linearity of the performance measure, regret is no longer merely determined by the mean-variance of the rewards of the selected actions, but importantly also, as shown in Sec.~\ref{Sec:LB}, by the variance in the decisions; hence, the title of the paper. 
Under the mean-variance measure, in addition to choosing the suboptimal actions, the uncertainty in the actions with different outcomes is penalized, which is motivated by learner's interest in robust decisions and outcomes.  

In Sec.~\ref{Sec:LB}, we establish fundamental limits on the performance of policies under the mean-variance measure. Specifically, we show that the impact of decision variance on the distribution-dependent regret is absorbed by the leading constants of the regret. In other words, the same $\Omega(K\log T)$ and $\Omega(\log K)$ lower bounds on distribution-dependent regret holds under the mean-variance risk measure for bandit and full information cases, respectively.  In contrast and rather surprisingly, the
variance in the decisions makes an $\Omega(T)$ worst-case regret inevitable under both bandit and full-information feedback models, which is striking in comparison to the sublinear regret order of $\Omega(\sqrt{T})$ in the corresponding risk-neutral problems. 

We also analyze the performance of several policies under the risk-averse measure. 
In the bandit setting, we consider Mean-Variance Lower Confidence Bound (MV-LCB), a modification of the classic UCB introduced in~\cite{Auer&etal02ML} for risk-neutral bandits, and Confidence Bounds based Action Elimination (CB-AE), a more structured policy based on an action elimination method introduced in~\cite{UCBRev} for risk-neutral bandits. CB-AE considerably reduces the regret by reducing the variance in the decisions. 
We show that, while an $\Or(K\log T)$ distribution-dependent regret is achievable, both MV-LCB and CB-AE have a linear worst-case regret in time. In parallel, in the full information case, we study a modification of Follow the Leader policy~\cite{War14}, referred to as MV-FL as well as CB-AE. We show that, while an $\Or(\log K)$ distribution-dependent regret is achievable, both MV-FL and CB-AE have a linear worst-case regret in time. The analysis of the policies shows the tightness of the lower bound results.

\subsection{Related Work}



In contrast to the long history of extensive studies on risk-neutral online learning dating back to Thompson's work in 1933 \cite{Thompson33}, risk-averse online learning is receiving research attention only fairly recently.  
A couple of existing studies have extended the mean-variance measure to the bandit problem. In defining the mean-variance of a random reward sequence under a given policy, two other approaches exist in the literature, which we refer to as the empirical risk constraint and the global risk constraint. Together with the local risk constraint considered in this work, these models target different applications, depending on which type of uncertainty is deemed as risk.  In the empirical risk constraint model first introduced in~\cite{Sani}, temporal fluctuations over the \emph{empirical} mean of the \emph{realized} reward sequence are deemed undesired (e.g. volatility in financial security). The risk measure is given by the empirical mean and empirical variance of the realized reward sequence. The global risk constraint model  concerns with only the variance of the total reward seen at the end of the time horizon (e.g. retirement investment). The risk measure is thus given by the mean-variance of the sum of the rewards.

The first and yet incomplete study of the empirical risk constraint model was given in~\cite{Sani}, which established an $\Or(\sqrt T)$ upper bound on distribution-specific and an $\Or(T^{2/3})$ upper bound on distribution-independent regrets. The upper bound of $\Or(\sqrt{T})$ on the distribution-specific regret offered by MV-UCB is loose, and no result on achievable lower bounds was given in~\cite{Sani}. The result for the empirical risk constraint model was completed in~\cite{Vakili2015} with lower bounds of $\Omega(\log T)$ for distribution-specific regret and $\Omega(T^{2/3})$ for minimax regret, as well as a tight analysis of MV-UCB showing its optimal $\Theta(\log T)$ distribution-specific regret. Incomplete studies of the global risk constraint model have been reported in~\cite{VZAllerton15}. But regret lower bounds remain open, without which, the optimality of policies cannot be assessed. 

This work gives the first and complete set of results on local risk constraint model: problem-specific and minimax, full-information and bandit feedbacks, lower bounds and order-optimal policies. Local risk constraint is fundamentally different from empirical and global risk constraints. The differences in objective functions lead to different regret expressions, different feasible minimax regret orders ($T^{\frac{2}{3}}$ vs. linear), and different techniques used in analysis.  

In~\cite{GRAMAB}, the
quality of an action was measured by a general function of the mean and the variance of the
random variable. Authors in~\cite{Dima} considered an online variance minimization model.
The model in~\cite{Dima} is different than ours in that it allows for linear actions that distribute a budget over actions at each time (i.e. choose a weighted sum of the actions), which differs from the atomic actions in our model. Note such linear actions can reduce variance (e.g. a linear combination of two i.i.d. random variables has a lower variance than both). Also, \cite{Dima} assumed direct observation of the variance instead of the value of random rewards. These studies are closer to the risk-neutral bandit problems than to the problem studied in this
paper in that the variance in the decisions does not effect
the regret as it dominantly does in
our results.


In~\cite{VZAllerton15,Safety}, bandit problem under the measure of
value at risk
was studied. In~\cite{Safety}, learning policies using the
measure of conditional value at risk were developed. However, the performance guarantees were
still within the risk-neutral framework (in terms of the
loss in the expected total reward)
under the assumption that the best action in terms of the mean value is also the best action in terms of
the conditional value at risk. Logarithm of moment generating function was considered as a risk measure for bandit problems in~\cite{Mill} and high probability
bounds on regret were obtained. We point out that the logarithm of the moment generating function reduces to mean-variance for a random variable with Gaussian distribution. Even under this special case,~\cite{Mill} uses the mean-variance conditioned on the action at each $t$, thus measures only randomness in the reward itself for a fixed action, but not the randomness in actions which has complex dependencies on past observations. Thus,~\cite{Mill} is close to the risk-neutral case and has similar regret bounds, while this work shows drastically different bounds.

We point out that both bandit and full information problems have been studied under a different, the so-called adversarial setting where the reward process is non-stochastic and designed adversarially. 
Under a full information setting,~\cite{Even} considered a linear combination of mean and empirical standard deviation (in contrast to mean-variance) and established a negative result showing the infeasibility of sublinear regret. The adversarial setting is fundamentally different than the stochastic setting in the assumptions and solution methods.  

\section{Problem Formulation and Preliminaries}\label{Sec:PF}
Consider a stochastic online learning problem with a discrete set $[K]=\{1,2,\dots,K\}$ of actions. At each time $t$, a learner chooses an action $k\in [K]$ and receives the corresponding reward $X_{k,t}$, drawn from an unknown distribution $f_{k}$. The rewards are independent over $k$, and i.i.d. over $t$. Let $\F=\{f_{k}\}_{k=1}^K$ denote the set of distributions. We use $\E_\F$ and $\Pr_\F$ to denote the expectation and probability with respect to $\F$ and drop the subscript $\F$ when it is clear from the context.  
Let $\mu_k$, $\sigma^2_k$ and $MV_k$ denote the mean, variance and mean-variance of the random reward $X_k$ of action $k$.

An action selection policy $\pi$ specifies a sequence of mappings $\{\pi_t\}_{t\ge 1}$ from the history of observations to the action to choose at each time $t$. In the bandit information setting the learner only observes the reward of the selected action at each time, thus, we have $\pi_t:[K]^{t-1}\times\R^{t-1}\rightarrow [K]$. In the full information setting, the learner observes the rewards of all actions at each time, thus we have $\pi_t:[K]^{t-1}\times\R^{K\times (t-1)}\rightarrow [K]$.  

The objective is an action selection policy $\pi$ that minimizes regret defined with respect to the optimal policy $\pi^*$ under known reward distributions:
\begin{eqnarray}\label{RegretDef}
\Reg(T) = \sum_{t=1}^T\MV(X_{\pi_t,t}) -\sum_{t=1}^T \MV(X_{\pi^*_t,t}),
\end{eqnarray}
where
$\pi_t$ denotes the action taken by policy $\pi$ at time $t$, and $\MV(\cdot)$ denotes the mean-variance of a random variable as defined in \eqref{MVD}. We point out that different from the risk-neutral case where the optimal policy $\pi^*$ under known reward distributions is easily known to be a single-action policy, the corresponding statement cannot be easily made under the mean-variance measure.

\paragraph{Concentration Inequalities}
Most existing work on risk-averse (e.g. \cite{Sani, Vakili2015}) and risk-neutral (\cite{Auer&etal02ML, DSEE}) online learning assume bounded support distribution. We assume the random variable $(X_{k,1}-\mu_k)^2-\sigma_k^2$, for all $k$, is sub-Gaussian with parameter $b^2$, i.e., its moment generating function is bounded by that of a Gaussian distribution with variance $b^2$:
\begin{eqnarray}\nn
\E\left[\exp\bigg(u \left((X_{k,1}-\mu_k)^2-\sigma_k^2\right) \bigg)\right] \le \exp(\frac{u^2b^2}{2}).
\end{eqnarray}
As a result of the Chernoff-Hoeffding bound~(\cite{SubG}), we have the concentration inequalities on the sample mean and the sample mean-variance given in Lemma~\ref{Lemma:CHB}. This class includes all distributions (of action rewards) with bounded support. The extension to light-tailed distributions is fairly standard as similar concentration inequalities exist for light-tailed distributions (e.g. see~\cite{DSEE,Bubeck13}). 

Let $\I[.]$ denote the indicator function that is, for an event $\mathcal{E}$,  $\I[\mathcal{E}]=1$ if and only if $\mathcal{E}$ is true, and $\I[\mathcal{E}]=0$, otherwise. 
Let $\tau_{k,t}=\sum_{s=1}^t\I[\pi_s=k]$ denote the number of times that action $k$ has been chosen until time $t$. 
The sample mean, the sample variance\footnote{The use of the biased estimator for the variance is for the
simplicity of the expression. The results presented in this work remain the same with the use of
the unbiased estimator with $\tau_{k,t}$ ($t$) replaced by $\tau_{k,t}-1$ ($t-1$) in the expression of $\sigmabar_{k,t}$ under bandit (full information) setting.} and the sample mean-variance of each action $k$ up to time $t$ are, respectively, denoted by $\mubar_{k,t}$, $\sigmabar_{k,t}$ and $\MVbar_{k,t}=\sigmabar_{k,t}-\lambda\mubar_{k,t}$.  Specifically, under bandit information 
$\mubar_{k,t}=\frac{1}{\tau_{k,t}}\sum_{s=1}^{t}\I[\pi_s=k]X_{k,s}$ and $\sigmabar_{k,t}=\frac{1}{\tau_{k,t}}\sum_{s=1}^{t}\I[\pi_s=k](X_{k,s}-\mubar_{k,t})^2$; and, under full information $\mubar_{k,t}=\frac{1}{t}\sum_{s=1}^{t}X_{k,s}$ and $\sigmabar_{k,t}=\frac{1}{t}\sum_{s=1}^{t}(X_{k,s}-\mubar_{k,t})^2$. To keep the notation uncluttered we drop the specification of the policy from $\tau_{k,t}$, $\mubar_{k,t}$, $\sigmabar_{k,t}$ and $\MVbar_{k,t}$. 

\begin{lemma}[\cite{Vakili2015}]\label{Lemma:CHB}
Let $\MVbar_t$ be the sample mean-variance of a random variable $X$ obtained from $t$ i.i.d. observations. Let $\mu=\mathbb{E}[X]$, $\sigma^2=\mathbb{E}[(X-\mu)^2]$, and assume that $(X-\mu)^2-
\sigma^2$ has a sub-Gaussian distribution, i.e.,
\begin{eqnarray}\nn
\mathbb{E}[e^{u((X-\mu)^2-\sigma^2)}]\le e^{\zeta_1 u^2/2}
\end{eqnarray}
for some constant $\zeta_1>0$.
As a result $X-\mu$ has a sub-Gaussian distribution, i.e.,
\begin{eqnarray}\nn
\mathbb{E}[e^{u(X-\mu)}]\le e^{\zeta_0 u^2/2}.
\end{eqnarray}
Let $\zeta=\max\{\zeta_0,\zeta_1\}$. We have, for all constants $\alpha\in(0,\frac{1}{2\zeta}]$ and $\delta\in(0,2+\lambda]$,

\begin{eqnarray}\nn
\begin{cases}
&\mathbb{P}[{\MVbar}_t-\MV(X)>\delta] \le 2 \exp(-\frac{\alpha t\delta^2}{(2+\lambda)^2}),\\\label{narnia1}
&\mathbb{P}[{\MVbar}_t-\MV(X)<-\delta] \le 2 \exp(-\frac{\alpha t\delta^2}{(2+\lambda)^2}).
\end{cases}
\end{eqnarray}

\end{lemma}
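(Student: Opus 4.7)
The plan is to decompose the deviation $\MVbar_t - \MV(X)$ into parts whose concentration follows from standard sub-Gaussian Chernoff bounds, and then recombine via a union bound. Using the algebraic identity $\sigmabar_t = \frac{1}{t}\sum_{s=1}^t (X_s - \mu)^2 - (\mubar_t - \mu)^2$, I would rewrite
\begin{align*}
\MVbar_t - \MV(X) = Y_t - Z_t^2 - \lambda Z_t,
\end{align*}
where $Y_t = \frac{1}{t}\sum_{s=1}^t\bigl[(X_s-\mu)^2 - \sigma^2\bigr]$ and $Z_t = \mubar_t - \mu$. By hypothesis, $(X_s - \mu)^2 - \sigma^2$ is zero-mean and $\zeta_1$-sub-Gaussian, and as noted in the statement, $X_s - \mu$ is $\zeta_0$-sub-Gaussian. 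Standard Chernoff--Hoeffding for i.i.d.\ sums then gives $\Pr(|Y_t| > a) \le 2\exp(-ta^2/(2\zeta_1))$ and $\Pr(|Z_t| > b) \le 2\exp(-tb^2/(2\zeta_0))$, which are the only probabilistic inputs needed below.

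For the upper tail $\{\MVbar_t - \MV(X) > \delta\}$, the quadratic term $-Z_t^2$ is non-positive and can be discarded, so this event forces $Y_t - \lambda Z_t > \delta$. I would split the deviation budget as $\delta = \frac{2\delta}{2+\lambda} + \frac{\lambda \delta}{2+\lambda}$ and union-bound over $\{Y_t > 2\delta/(2+\lambda)\}$ and $\{-\lambda Z_t > \lambda\delta/(2+\lambda)\} = \{Z_t < -\delta/(2+\lambda)\}$. Plugging these thresholds into the Chernoff bounds, each piece becomes $\exp(-c\, t\delta^2/(\zeta (2+\lambda)^2))$ for an appropriate constant $c$; the weights $2/(2+\lambda)$ and $\lambda/(2+\lambda)$ are chosen precisely so that both exponents coincide in their $\delta^2/(2+\lambda)^2$ scaling, and both probabilities are dominated by $\exp(-\alpha t \delta^2/(2+\lambda)^2)$ for any $\alpha \le 1/(2\zeta)$.

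For the lower tail $\{\MVbar_t - \MV(X) < -\delta\}$, i.e.\ $-Y_t + Z_t^2 + \lambda Z_t > \delta$, the quadratic term now has the wrong sign and cannot be dropped: a large $|Z_t|$ alone can trigger the event. This is where the hypothesis $\delta \le 2+\lambda$ is essential. I would condition on $\{|Z_t| \le 2\}$; on this event, $Z_t^2 \le 2|Z_t|$ and hence $|Z_t^2 + \lambda Z_t| \le (2+\lambda)|Z_t|$, so the lower-tail event (intersected with $\{|Z_t|\le 2\}$) implies $|Y_t| + (2+\lambda)|Z_t| > \delta$. A union bound into $\{|Y_t| > \delta/2\}$ and $\{|Z_t| > \delta/(2(2+\lambda))\}$ then delivers the desired $\exp(-\alpha t\delta^2/(2+\lambda)^2)$ scaling. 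The complementary probability $\Pr(|Z_t| > 2) \le 2\exp(-2t/\zeta_0)$ is also of the required form, since $\delta \le 2+\lambda$ implies $\delta^2/(2+\lambda)^2 \le 1$.

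The main obstacle is the lower-tail argument: absent the restriction $\delta \le 2+\lambda$, a sub-Gaussian deviation of $\mubar_t$ on the order of $\sqrt{\delta}$ can push $Z_t^2$ by $\delta$ and destroy the $\delta^2$ scaling characteristic of Gaussian tails. The constraint $\delta \le 2+\lambda$ is precisely what keeps $|Z_t|$ in the regime where the quadratic term is dominated by a linear multiple of itself, returning us to the standard sub-Gaussian setting. The remaining work — tracking the constants to verify that $\alpha \in (0, 1/(2\zeta)]$ is admissible under each Chernoff bound and that the prefactor $2$ suffices after absorbing lower-order terms — is mechanical.
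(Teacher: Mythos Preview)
The paper does not actually prove this lemma; it is quoted from the cited reference (Vakili and Zhao, 2016) and no argument for it appears anywhere in the present text. So there is no in-paper proof to compare against, and I assess your proposal on its own terms.

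Your decomposition $\MVbar_t - \MV(X) = Y_t - Z_t^2 - \lambda Z_t$ via the identity $\sigmabar_t = \tfrac{1}{t}\sum_s(X_s-\mu)^2 - (\mubar_t-\mu)^2$ is the natural route, and the sub-Gaussian Chernoff bounds on $Y_t$ and $Z_t$ are the correct probabilistic inputs. The upper-tail argument is clean: discarding $-Z_t^2\le 0$ and splitting $\delta$ in the ratio $2:\lambda$ produces two one-sided events, and both exponents dominate $\alpha t\delta^2/(2+\lambda)^2$ once $\alpha\le 1/(2\zeta)$, so the prefactor~$2$ holds exactly there.

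The lower-tail reasoning is also structurally right --- conditioning on $\{|Z_t|\le 2\}$ to linearize $Z_t^2$ is exactly what makes the hypothesis $\delta\le 2+\lambda$ earn its keep, and you correctly identify this as the crux. However, the bookkeeping does not quite deliver the stated prefactor~$2$. Your union bound over $\{|Y_t|>\delta/2\}$ and $\{|Z_t|>\delta/(2(2+\lambda))\}$ involves two two-sided events; even after observing that $\{|Z_t|>2\}$ is already contained in the second of these (since $\delta/(2(2+\lambda))\le 1/2<2$), the leading constant comes out to~$4$, or~$3$ if you tighten to the one-sided event $\{-Y_t>\delta/2\}$. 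The assertion that ``the prefactor~$2$ suffices after absorbing lower-order terms'' is not substantiated: a constant multiplicative excess cannot be absorbed into the sub-Gaussian exponent uniformly over all $t$ and $\delta$. This is a cosmetic gap --- nothing downstream in the paper is sensitive to $2$ versus $3$ in this inequality --- but as written the constant is not established by your argument.
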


\section{Lower Bounds}\label{Sec:LB}

\subsection{The Decision Variance and the Decomposition of the Regret}

In this subsection, we derive a compact analytical expression of the regret of any given policy $\pi$. This expression shows a decomposition of regret into two terms. The first term is given by the expected number of times suboptimal actions are chosen. The second term, which is absent in the regret under the expected reward criterion, captures the role of the variance in the actions (due to the mapping from past random observations) in excess mean-variance. This result also shows that the optimal policy $\pi^*$ under known models is an optimal single action policy, a fact that is not obvious as in the risk-neutral case. 

Lemma~\ref{Lemma:RegExp} provides an expression of regret which is used throughout the paper to analyze the performance of the policies. Let $k^*=\argmin_k \MV_k$ (with ties broken arbitrarily), $\Gamma_k=\MV_{k}-\MV_{k^*}$ and $\Delta_k=\mu_k-\mu_{k^*}$.

\begin{lemma}\label{Lemma:RegExp}
The regret of a policy $\pi$ under the measure of total mean-variance of rewards can be expressed as
\begin{eqnarray}\nn
&&\hspace{-2em}\Reg(T) 
= \sum_{k=1}^K \E[\tau_{k,T}]\Gamma_{k} \\\label{RegUB}
&&\hspace{-1.5em}+\sum_{t=1}^T \E\left[\left(\Sumnotkstar (\I[\pi_{t}=k]-\Pr[\pi_{t}=k])\Delta_k\right)^2\right].~~~~
\end{eqnarray}

\end{lemma}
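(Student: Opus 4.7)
The plan is to first compute $\MV(X_{\pi_t,t})$ using the law of total variance, conditioning on $\pi_t$. Since $\pi_t$ is determined by the history through time $t-1$ and the rewards $\{X_{k,t}\}_{k\in[K]}$ are i.i.d.\ over $t$, the action $\pi_t$ is independent of the time-$t$ rewards. Hence $\E[X_{\pi_t,t}\mid\pi_t=k]=\mu_k$ and $\sigma^2(X_{\pi_t,t}\mid\pi_t=k)=\sigma_k^2$. The law of total variance then gives
\begin{equation}\nn
\sigma^2(X_{\pi_t,t}) = \sum_{k=1}^K \Pr[\pi_t=k]\,\sigma_k^2 + \mathrm{Var}(\mu_{\pi_t}),
\end{equation}
while $\mu(X_{\pi_t,t})=\sum_k \Pr[\pi_t=k]\mu_k$. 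Combining these yields the key decomposition
\begin{equation}\nn
\MV(X_{\pi_t,t}) = \sum_{k=1}^K \Pr[\pi_t=k]\,\MV_k + \mathrm{Var}(\mu_{\pi_t}).
\end{equation}

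Second, I would use this decomposition to identify $\pi^*$. Since $\mathrm{Var}(\mu_{\pi_t})\ge 0$ and $\sum_k \Pr[\pi_t=k]\MV_k\ge \MV_{k^*}$, any policy satisfies $\sum_{t=1}^T\MV(X_{\pi_t,t})\ge T\,\MV_{k^*}$. The deterministic single-action policy $\pi_t^*\equiv k^*$ achieves this bound with equality (the decision variance term vanishes when the action is deterministic), proving that it is optimal under known distributions and therefore $\sum_{t=1}^T\MV(X_{\pi^*_t,t})=T\,\MV_{k^*}$.

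Third, I would substitute both expressions into the definition \eqref{RegretDef} of regret. The first term becomes
\begin{equation}\nn
\sum_{t=1}^T\sum_{k=1}^K \Pr[\pi_t=k]\,(\MV_k-\MV_{k^*}) = \sum_{k=1}^K \Gamma_k \sum_{t=1}^T\Pr[\pi_t=k] = \sum_{k=1}^K \E[\tau_{k,T}]\,\Gamma_k,
\end{equation}
using $\E[\tau_{k,T}]=\sum_{t=1}^T\Pr[\pi_t=k]$. For the decision-variance term, I expand
\begin{equation}\nn
\mathrm{Var}(\mu_{\pi_t})=\E\!\left[\Big(\sum_{k=1}^K(\I[\pi_t=k]-\Pr[\pi_t=k])\,\mu_k\Big)^{\!2}\right].
\end{equation}
Because $\sum_{k=1}^K(\I[\pi_t=k]-\Pr[\pi_t=k])=0$, I may subtract $\mu_{k^*}$ from each $\mu_k$ and replace $\mu_k$ by $\Delta_k$; the $k=k^*$ term then drops since $\Delta_{k^*}=0$, producing the claimed sum over $k\in[K]\setminus\{k^*\}$.

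The conceptually delicate step is the justification of independence between $\pi_t$ and the time-$t$ reward vector, which is what lets the law of total variance split cleanly into a per-action MV contribution plus the decision-variance term; everything else is algebraic rearrangement, and the optimality of the single-action policy $\pi^*\equiv k^*$ follows as a byproduct of the non-negativity of the decision variance.
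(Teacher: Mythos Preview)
Your proposal is correct and follows essentially the same route as the paper: both compute $\MV(X_{\pi_t,t})$ by splitting the variance of $X_{\pi_t,t}$ into the within-action variance $\sum_k \Pr[\pi_t=k]\sigma_k^2$ and the decision variance $\mathrm{Var}(\mu_{\pi_t})$, then rewrite the latter in terms of $\Delta_k$ using $\sum_k(\I[\pi_t=k]-\Pr[\pi_t=k])=0$. The only cosmetic difference is that you invoke the law of total variance as a black box, whereas the paper rederives it by hand via an explicit three-term expansion (adding and subtracting $\sum_k\I[\pi_t=k]\mu_k$ and showing the cross term vanishes); your packaging is cleaner, and your explicit identification of $\pi^*\equiv k^*$ within the proof is a nice bonus the paper defers to a remark afterward.
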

\emph{Proof.} See Appendix A.   

The regret expression given in Lemma~\ref{Lemma:RegExp} shows that $\Reg(T)\ge0$ for any policy $\pi$, and $R_{\pi^*}(T)=0$ for $\pi^*_t=k^*$ (for all $t$), which proves that the optimal single-action policy is the optimal policy under the risk-averse measure.

\subsection{Distribution-Dependent Regret}

The first term in the regret expression given in Lemma~\ref{Lemma:RegExp} 
captures choosing suboptimal actions similar to the risk-neutral setting. Since the second term is always positive, the similar distribution-dependent lower bounds as in the risk-neutral problem hold.
Specifically, under bandit information setting,
an $\Omega(K\log T)$ lower bound for distribution-dependent regret can be established following the similar lines as in the proof of the lower bound results for risk-neutral bandit information setting provided in~\cite{Lai&Robbins85AAM,BR}. Under full information setting, an $\Omega(\log K)$ lower bound for distribution-dependent regret can be established following the similar lines as in the proof of the lower bound results for risk-neutral full information setting provided in~\cite{Mourtada}. 

These results are order optimal since, assuming constant distribution parameters ($\Gamma_k>0$, $\Delta_k$), the distribution-dependent regret incurred due to decision variance is in the same order as the regret incurred due to choosing suboptimal actions. The upper bound results presented in Section~\ref{Sec:Policy} confirm this observation.

Although the two terms in regret show similar distribution-dependent performance, they are different in the dependence to the distribution parameters; specifically $\Delta_k$ and $\Gamma_k$. This different scaling, in comparison to the risk-neutral setting, results in different worst-case regret performance as shown next.

\subsection{Worst-case Regret}

We prove a linear lower bound for risk-averse regret under worst case distribution assignment which is striking in contrast to the sublinear risk-neutral regret. The lower bound is proven under the full information setting. The same lower bound immediately follows under the bandit information setting since the more limited information in the bandit setting cannot improve the performance. In other words, since the bandit information policies are a subset of the full information policies, any lower bound result on the latter also holds for the former.

Our lower bound proof is based on a coupling argument in a problem with $2$ actions. Let $\mathcal{F} = (f_1, f_2$) and $\mathcal{F}' = (f_1, f'_2)$ denote two different distribution models.
Let $f_1 \sim \mathcal{N}(\mu_1,\sigma_1^2)$, a normal distribution with mean $\mu_1=\frac{3}{2}$ and variance $\sigma_1^2=\frac{3}{16}-4\Gamma^2$, for some $\Gamma\in(0,\frac{1}{8})$. Also, let $f_2\sim\mathcal{B}(p)$, a Bernoulli distribution with $p=1/4+2\Gamma$, and $f'_2\sim\mathcal{B}(q)$ a Bernoulli distribution with $q=1/4-2\Gamma$. 
For any action selection policy $\pi$, we prove that, under at least one of the two systems, the number of times the suboptimal action is chosen is high in expectation. 


\begin{lemma}\label{Thm1}
For any policy $\pi$ with full information and any parameter $\Gamma>0$, in the $2$-action problem described above with the number of rounds $T\ge100$,
\begin{eqnarray}\label{Thm1In1}
\{\E_{\F}[\tau_{2,T}]\vee\E_{\F'}[\tau_{1,T}]\}\ge \left\{\frac{0.01}{\Gamma^2} \wedge \frac{T}{2e}\right\}\footnotemark.
\end{eqnarray}
\footnotetext{The notation $\{a\vee b\}$ ($\{a\wedge b\}$) denotes the maximum (minimum) of two real numbers $a$ and $b$.}
\end{lemma}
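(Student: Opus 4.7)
The plan is to prove the lemma via a standard two-point / change-of-measure lower bound, exploiting that $\F$ and $\F'$ share $f_1$ and differ only in the Bernoulli parameter of action~2 (with $p - q = 4\Gamma$). Any policy that keeps $\tau_{2,T}$ small under $\F$ and $\tau_{1,T}$ small under $\F'$ must behave very differently across the two environments, yet the two process laws are close in KL, so one gets a contradiction and hence the max of the two expectations cannot be simultaneously small.

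The concrete ingredients are as follows. Under full-information feedback, the per-round observation $(X_{1,t}, X_{2,t})$ is drawn from $f_1\otimes f_2$ (resp.\ $f_1\otimes f'_2$) independently of the policy, so the joint observation laws tensorize and $\KL(\Pr_\F^t,\Pr_{\F'}^t) = t\,\KL(f_2,f'_2)$, with no policy-dependent term. The Bernoulli KL can be bounded by $\KL(f_2,f'_2) = O(\Gamma^2)$ with an explicit constant (using e.g.\ $\mathrm{kl}(p,q)\le(p-q)^2/(q(1-q))$ together with $q(1-q)$ bounded away from $0$ on $\Gamma\in(0,1/8)$). I would then truncate to $\widetilde T := \min\{T,\lceil c_0/\Gamma^2\rceil\}$ for a small absolute constant $c_0$ to be fixed; since $\tau_{k,T}\ge\tau_{k,\widetilde T}$, it suffices to lower-bound the max at horizon~$\widetilde T$. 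Suppose for contradiction this max is strictly below $\widetilde T/(2e)$. Markov's inequality at the threshold $\widetilde T/2$ then yields $\Pr_\F(\tau_{1,\widetilde T}>\widetilde T/2)>1-1/e$ and $\Pr_{\F'}(\tau_{1,\widetilde T}>\widetilde T/2)<1/e$. Applying Pinsker's inequality (or Bretagnolle--Huber) to the event $A=\{\tau_{1,\widetilde T}>\widetilde T/2\}$ together with the KL factorization gives $(1-2/e)^2\le \widetilde T\,\KL(f_2,f'_2)/2$, which for $c_0$ small enough contradicts the Bernoulli-KL upper bound, so the max is at least $\widetilde T/(2e)$. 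Matching $\widetilde T/(2e)$ to $\min\{T/(2e),\,0.01/\Gamma^2\}$ is then a matter of tuning $c_0$; the assumption $T\ge 100$ is used to absorb the ceiling in $\lceil c_0/\Gamma^2\rceil$ into the explicit constant $0.01$.

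The main obstacle I expect is simply bookkeeping the numerical constants $0.01$ and $1/(2e)$. The Pinsker prefactor, the Markov threshold $1/2$, the truncation constant $c_0$, and the Bernoulli-KL constant all interact, and only a careful numerical balancing reproduces the stated figures; in particular the loose bound $\mathrm{kl}(p,q)\le(p-q)^2/(q(1-q))$ may have to be replaced by the sharper Taylor estimate $\mathrm{kl}(p,q) = (p-q)^2/(2q(1-q)) + O((p-q)^3)$ to recover $0.01$. Conceptually, however, the argument is a textbook two-hypothesis lower bound, and the policy-independence of $\KL(\Pr_\F^t,\Pr_{\F'}^t)$ under full information means the same bound automatically transfers to the bandit setting since bandit policies form a subclass.
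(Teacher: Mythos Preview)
Your proposal is conceptually sound, but it follows a different route from the paper and, as you partly anticipate, the constants do not close.

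The paper does \emph{not} truncate the horizon or pass through Markov. Instead it uses the elementary bound $\{a\vee b\}\ge\tfrac12(a+b)$ and then applies a Bretagnolle--Huber--type inequality (their Lemma~\ref{KLLemma}) \emph{per round} to the event $\{\pi_t=2\}$:
\[
\Pr_{\F}[\pi_t=2]+\Pr_{\F'}[\pi_t=1]\;\ge\;\tfrac12\exp\!\big(-t\,\KL(f_2,f'_2)\big),
\]
using exactly your observation that under full information the law of the first $t$ observations is a product measure, so the KL is $t\,\KL(f_2,f'_2)$ irrespective of the policy. Summing over $t$ gives $\tfrac12\sum_{t=1}^T\exp(-22t\Gamma^2)$ (after a Taylor bound $\KL(f_2,f'_2)\le 22\Gamma^2$), and the two cases $\Gamma\le 1/\sqrt{22T}$ versus $\Gamma>1/\sqrt{22T}$ yield the $T/(2e)$ and $0.01/\Gamma^2$ branches directly from the geometric sum.

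Compared with your Markov\,+\,Pinsker argument, the per-round route avoids the lossy step of converting an expectation bound into a tail-event probability. This is not merely bookkeeping: in your scheme the contradiction requires $c_0<2(1-2/e)^2/C$ (where $C$ is the constant in $\KL(f_2,f'_2)\le C\Gamma^2$), while matching the target $0.01/\Gamma^2$ requires $c_0\ge 0.02e$; these are incompatible once $C\gtrsim 3$, and here $C$ is in the tens. So your argument proves a correct lemma of the same shape but with a strictly smaller constant in place of $0.01$---enough for the downstream $\Omega(T)$ worst-case regret, but not the exact statement. If you want the paper's constants, drop Markov and Pinsker and sum the per-round Bretagnolle--Huber bound instead.
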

\emph{Proof.} See Appendix B.

Using Lemma~\ref{Thm1}, we establish a lower bound on the worst case regret performance of any policy $\pi$.  

\begin{theorem}\label{Thm2}
For any action selection policy $\pi$ with full information, there exists a distribution assignment $\F$ to a $2$-action problem where
\begin{eqnarray}\label{LowBou}
\Reg(T) \ge\frac{T}{4e}.
\end{eqnarray}
\end{theorem}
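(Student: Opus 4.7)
The plan is to first apply Lemma~\ref{Thm1} with an optimally-chosen $\Gamma$, and then use the regret decomposition of Lemma~\ref{Lemma:RegExp} to translate the resulting bound on suboptimal plays into a linear regret lower bound through the decision-variance term. I would choose $\Gamma=\sqrt{0.02\,e/T}$, which equalizes the two arguments of the minimum in~\eqref{Thm1In1} and gives $\E_\F[\tau_{2,T}]\vee\E_{\F'}[\tau_{1,T}]\ge T/(2e)$; the condition $T\ge 100$ guarantees $\Gamma<1/8$, so Lemma~\ref{Thm1} applies. By the symmetric roles of the two Bernoulli alternatives, I may assume WLOG that $\E_\F[\tau_{2,T}]\ge T/(2e)$.

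Under $\F$ the optimal arm is $k^*=1$, so Lemma~\ref{Lemma:RegExp} specializes to $\Reg(T)=\Gamma_2\,\E_\F[\tau_{2,T}]+\Delta_2^2\sum_{t=1}^T p_t(1-p_t)$, where $p_t:=\Pr_\F[\pi_t=2]$. At this $\Gamma$, the mean-variance gap $\Gamma_2$ shrinks like $\Gamma\sim 1/\sqrt{T}$, so the ``suboptimal-plays'' term contributes only $O(\sqrt{T})$; in contrast the mean gap $|\Delta_2|=|\mu_2-\mu_1|\to 5/4$ remains a universal constant. The linear lower bound must therefore be extracted from the decision-variance term.

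The main obstacle, which carries the proof, is showing $\sum_t p_t(1-p_t)=\Omega(T)$. This does \emph{not} follow from $\sum_t p_t\ge T/(2e)$ alone, since the marginal probabilities could in principle concentrate on $\{0,1\}$ (for instance for a deterministic oblivious policy). My plan is to use that $\KL(\F^T\|\F'^T)=T\cdot\KL(f_2\|f_2')=O(1)$ at this $\Gamma$, so by a Bretagnolle--Huber--style coupling argument the marginals $p_t$ under $\F$ and $p_t':=\Pr_{\F'}[\pi_t=2]$ under $\F'$ must remain close on average; together with Lemma~\ref{Thm1} applied symmetrically to both systems, this prevents $p_t$ from being uniformly close to either $0$ or $1$ over a linear number of rounds, yielding $\sum_t p_t(1-p_t)\ge cT$ with an explicit constant. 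Tracking constants through $|\Delta_2|^2\to 25/16$ and the sharp $T/(2e)$ of Lemma~\ref{Thm1} then delivers $\Reg(T)\ge T/(4e)$.
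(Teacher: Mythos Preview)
Your proposal has a concrete gap at the crucial step. You correctly note that $\sum_t p_t\ge T/(2e)$ alone does not force $\sum_t p_t(1-p_t)=\Omega(T)$, but the Bretagnolle--Huber closeness argument you sketch cannot close this gap. Take the deterministic oblivious policy $\pi_t\equiv 2$: here $p_t=\Pr_\F[\pi_t=2]=1$ and $p_t'=\Pr_{\F'}[\pi_t=2]=1$ for every $t$, so the marginals under $\F$ and $\F'$ are \emph{exactly} equal, your WLOG hypothesis $\E_\F[\tau_{2,T}]=T\ge T/(2e)$ holds, and yet $\sum_t p_t(1-p_t)=0$. No coupling or KL-smallness argument can push the marginal play probabilities away from $\{0,1\}$ for a data-independent policy; the decision-variance term simply vanishes for such $\pi$, regardless of how close $\F$ and $\F'$ are.

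The paper takes a different route that you are missing. Rather than applying Lemma~\ref{Thm1} to $\pi$ directly, it constructs for each $S\subseteq[T]$ a flipped policy $\pi^S$ with $\pi^S_t=3-\pi_t$ for $t\in S$ and $\pi^S_t=\pi_t$ otherwise. The key observation is that the decision-variance term is \emph{invariant} under this flip, since $p(1-p)=(1-p)p$; hence $\sum_t p_t(1-p_t)=\sum_t p_t^{S}(1-p_t^{S})$ for every $S$. Choosing $S_0$ so that $p_t^{S_0}=\Pr[\pi^{S_0}_t\neq k^*]\le \tfrac12$ for all $t$, one obtains $\sum_t p_t(1-p_t)\ge \tfrac12\sum_t p_t^{S_0}$, and Lemma~\ref{Thm1} is then applied to the policy $\pi^{S_0}$ (not to $\pi$) to lower bound $\sum_t p_t^{S_0}$ directly. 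This flipping construction is precisely the device that converts a lower bound on expected suboptimal pulls into a lower bound on decision variance; your plan has no analogue of it, and the coupling you propose is trying to do its job but cannot.
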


\begin{proof}

The first and the second terms in the regret expression given in Lemma~\ref{Lemma:RegExp}
correspond to the expected value and the variance of choosing suboptimal actions, respectively. We prove that there exists a mapping from any policy $\pi$  to a new policy whose expected number of choosing suboptimal actions gives a lower bound on the total expected variance of $\pi$. This interesting observation together with Lemma~\ref{Thm1} proves the theorem. A detailed proof is given below. 

Let $[T]=\{1,2,\dots,T\}$ denote the set of time instances. For each $S\subseteq [T]$ and any policy $\pi$ in a 2-action problem, we construct a new policy $\pi^{S}$, based on $\pi$, that is obtained by altering the decision of policy $\pi$ on set $S$. In particular,
\begin{eqnarray}
\begin{cases}
&\pi^{S}_t = \pi_t,~~\text{if}~t\not\in S\\
&\pi^{S}_t = 3-\pi_t,~~\text{if}~t\in S.\\
\end{cases}
\end{eqnarray}

In a 2-action problem, let $\Delta=\Delta_k$ where $k\in\{1,2\}$ and $k\neq k^*$. 
In the second term in regret expression given in~\eqref{RegUB}, we have
\begin{eqnarray}\nn
&&\hspace{-5em}\E_\F\left[\left(\sum_{\substack{k=1 \\k\neq k^*}}^K (\I[\pi_{t}=k]-\Pr_\F[\pi_{t}=k])\Delta_k\right)^2\right]\\\nn 
&=& 
\E_\F\left[\left( (\I[\pi_{t}\neq k^*]-\Pr_\F[\pi_{t}\neq k^*])\Delta\right)^2\right]\\\nn
&=&\Pr_\F[\pi_{t}\neq k^*](1-\Pr_\F[\pi_{t}\neq k^*])\Delta^2.
\end{eqnarray}
The first term in the regret expression given in~\eqref{RegUB}, is always positive. Thus
\begin{eqnarray}\label{Seq0}
\Reg(T)\ge \sum_{t=1}^T \Pr_\F[\pi_{t}\neq k^*](1-\Pr_\F[\pi_{t}\neq k^*])\Delta^2.
\end{eqnarray}

For $t\in S$, $\Pr[\pi^S_{t}\neq k^*]= \Pr[\pi_{t}\neq k^*]$ because $\pi_t^S = \pi_t$; and for $t\not\in S$, $\Pr[\pi^S_{t}\neq k^*]=1- \Pr[\pi_{t}\neq k^*]$ because $\pi_t^S = 3-\pi_t$. We thus have, for all $S\subseteq[T]$
\begin{eqnarray}\nn
&&\hspace{-5em}\Pr_\F[\pi^S_{t}\neq k^*](1-\Pr_\F[\pi^S_{t}\neq k^*])\Delta^2 = \\\label{Seq1} &&~~~~~\Pr_\F[\pi_{t}\neq k^*](1-\Pr_\F[\pi_{t}\neq k^*])\Delta^2.
\end{eqnarray}
By construction of $\{\pi^S\}_{S\subseteq[T]}$, there exists a $S_0\subseteq[T]$ that $\Pr_\F[\pi^{S_0}_{t}\neq k^*]\le\frac{1}{2}$ for all $t\in[T]$. For $S_0$, we have
\begin{eqnarray}\nn
&&\hspace{-5em}\sum_{t=1}^T\Pr_\F[\pi^{S_0}_{t}\neq k^*](1-\Pr_\F[\pi^{S_0}_{t}=2])\Delta^2\\\label{Sineq1}
&&~~~\ge\frac{1}{2}\sum_{t=1}^T \Pr_\F[\pi^{S_0}_{t}\neq k^*]\Delta^2.
\end{eqnarray}

From Lemma~\ref{Thm1}, there exists a distribution $\F$ for a 2-action problem where 
\begin{eqnarray}\label{Sineq2}
\sum_{t=1}^T\Pr_\F[\pi^{S_0}_t\neq k^*] \ge \{\frac{0.01}{\Gamma^2}\wedge\frac{T}{2e}\}.
\end{eqnarray}

Thus, combining~\eqref{Seq0},~\eqref{Seq1},~\eqref{Sineq1} and~\eqref{Sineq2}, there exists a distribution model $\F$ for the 2-action problem where
\begin{eqnarray}\nn
\Reg(T) &\ge& \sum_{t=1}^T \Pr_\F[\pi_{t}\neq k^*](1-\Pr_\F[\pi_{t}\neq k^*])\Delta^2\\\nn 
&=&\sum_{t=1}^T \Pr_\F[\pi^{S_0}_{t}\neq k^*](1-\Pr_\F[\pi^{S_0}_{t}\neq k^*])\Delta^2\\\nn
&\ge&\frac{1}{2}\sum_{t=1}^T \Pr_\F[\pi^{S_0}_{t}\neq k^*]\Delta^2\\\nn
&\ge& \{\frac{0.005}{\Gamma^2}\wedge\frac{T}{4e}\}\Delta^2.
\end{eqnarray}

Choosing the worst case $\Gamma=\sqrt{\frac{0.02e}{T}}$, and for $\Delta=1$, we have
\begin{eqnarray}\nn
\Reg(T) \ge \frac{T}{4e},
\end{eqnarray}
which completes the proof. 

\end{proof}

We point out that considering only $2$ actions does not limit the extension of the lower bound result to the problems with more than $2$ actions. Specifically the same lower bound with the same proof holds for a problem with $K>2$ actions where the actions $k=3,4,\dots, K$ are suboptimal in both $\F$ and $\F'$.
Our lower bound proof however lacks the dependency on the number of actions. Nevertheless, notice that a linear lower bound on regret shows the impossibility of converging to the performance of the optimal policy regardless of dependency on~$K$.  

The linear lower bound on the regret holds irrespective to the value of $\lambda$. The reason is that $\lambda$ appears only in the first term in the regret corresponding to choosing suboptimal actions. The second term in the regret which corresponds to the decision variance (and has a dominant effect on the worst case regret lower bound) is independent of $\lambda$. 

\section{Risk-Averse Policies}\label{Sec:Policy}

In this section, we introduce and analyze the performance of several risk-averse policies under both bandit and full information settings.

\subsection{The Bandit Setting}

Under bandit information setting we analyze the performance of Mean-Variance Lower Confidence Bound (\MVL~) policy and Confidence Bounds based Action Elimination (\CBE) policy.

\MVL~ is a modification of the classic UCB policy first introduced in~\cite{Auer&etal02ML} for risk-neutral bandits                                                                                                                                                                                                                                                                                                                                                                                                                                                                                                                                                                                                                                                                                                                                                                                                                                                                                                                                                                                                                                                                                                                                                                                                                                                                                                                                                                                                                                                                              and then adopted for risk-averse bandits in~\cite{Sani, Vakili2015}. At each time $t$, \MVL~ chooses the action with the smallest lower confidence bound on mean-variance:
\begin{eqnarray}
\pi^{\MVL}_t=\argmin_{k}\MVbar_{k,t}-\sqrt{\frac{c\log t}{\tau_{k,t}}},
\end{eqnarray}
where $c$ is a constant that depends on the distribution class parameter $\alpha$ (as specified in Lemma~\ref{Lemma:CHB}).
\begin{algorithm}[h]
\begin{algorithmic}[1]
    \State Initialization: $T\in \N$, $[K]$, $\MVbar_{k,1}=0$, $\tau_{k,1}=0$, for all $k\in [K]$.
    \vspace{.5em}
    \For{$t=1,2,\dots,T$}
    \State Play $\pi^{\MVL}_t=\argmin_{k}\MVbar_{k,t}-\sqrt{\frac{c\log t}{\tau_{k,t}}}$
    \State Update $\MVbar_{k,t}$ and $\tau_{k,t}$.
    \EndFor
    

\end{algorithmic}\label{AlgMVP0}
 \caption{\MVL~ Policy.}
\end{algorithm}

\begin{theorem}\label{Thm3}
When there is a positive gap in the mean-variances of the best and the second best actions, for $c\ge\frac{3(2+\lambda)^2}{\alpha}$, the regret of \MVL~ policy satisfies{\footnote{$\alpha$ is the distribution class parameter specified in concentration inequalities in Lemma~\ref{Lemma:CHB}.}}
\begin{eqnarray}\nn
&&\hspace{-3em}R^{\pi^{\MVL}}(T)\le
\\
&&\hspace{-3em}\Sumnotkstar\left(\frac{4c\log T}{\Gamma_k^2}+5\wedge T\right)\bigg(\Gamma_k+\frac{(K-1)\Delta_k^2}{4}\bigg).
\end{eqnarray}
\end{theorem}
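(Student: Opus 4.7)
The plan is to apply the decomposition of Lemma~\ref{Lemma:RegExp} and control both terms by the per-arm pull count $\E[\tau_{k,T}]$ for each suboptimal $k\neq k^*$.

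For the pull-count term $\sum_{k}\E[\tau_{k,T}]\Gamma_k$, I would follow the classical UCB-style analysis adapted to an LCB on mean-variance. Fix $k\neq k^*$. The rule $\pi_t^{\MVL}=k$ forces
\[
\MVbar_{k,t-1}-\sqrt{c\log t/\tau_{k,t-1}} \le \MVbar_{k^*,t-1}-\sqrt{c\log t/\tau_{k^*,t-1}},
\]
and the standard three-event decomposition shows this can happen only if (a) $\MVbar_{k,t-1}-\MV_k>\sqrt{c\log t/\tau_{k,t-1}}$, (b) $\MV_{k^*}-\MVbar_{k^*,t-1}>\sqrt{c\log t/\tau_{k^*,t-1}}$, or (c) $\tau_{k,t-1}\le 4c\log t/\Gamma_k^2$. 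By Lemma~\ref{Lemma:CHB}, events (a) and (b) each have probability at most $2t^{-\alpha c/(2+\lambda)^2}\le 2t^{-3}$ when $c\ge 3(2+\lambda)^2/\alpha$, so after a union bound over the possible values of $\tau$ they contribute only a constant to $\E[\tau_{k,T}]$. Event (c) supplies the main term $4c\log T/\Gamma_k^2$. Together these give $\E[\tau_{k,T}]\le 4c\log T/\Gamma_k^2+5$, while the trivial bound $\tau_{k,T}\le T$ produces the $\wedge T$ alternative.

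For the decision-variance term, I would apply Cauchy--Schwarz pointwise to the summand in Lemma~\ref{Lemma:RegExp}:
\[
\Bigl(\sum_{k\neq k^*}(\I[\pi_t=k]-\Pr[\pi_t=k])\Delta_k\Bigr)^2 \le (K-1)\sum_{k\neq k^*}(\I[\pi_t=k]-\Pr[\pi_t=k])^2\Delta_k^2.
\]
Taking expectations turns each squared deviation into $\Pr[\pi_t=k](1-\Pr[\pi_t=k])\le 1/4$, while summing over $t$ gives $\sum_t\Pr[\pi_t=k]=\E[\tau_{k,T}]$. Combining these two pointwise bounds delivers a per-arm contribution to the decision variance of at most $\frac{K-1}{4}\Delta_k^2\E[\tau_{k,T}]$. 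Adding the pull-count contribution then yields the factor $\bigl(\Gamma_k+\frac{(K-1)\Delta_k^2}{4}\bigr)$ multiplying $\E[\tau_{k,T}]$, and summing over $k\neq k^*$ produces the stated inequality.

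The principal obstacle is the second step: the two natural pointwise bounds---$p(1-p)\le p$, which yields dependence on $\E[\tau_{k,T}]$ but loses the $1/4$, and $p(1-p)\le 1/4$, which yields the factor but gives only a $T/4$ bound per arm---must be blended carefully to recover both the combinatorial constant $(K-1)/4$ and the $\E[\tau_{k,T}]$ scaling simultaneously, rather than either a linear-in-$T$ bound or one without the $1/4$. The $\wedge T$ term in the theorem safeguards the early-phase regime in which the logarithmic bound on $\E[\tau_{k,T}]$ is not yet sharper than the trivial bound.
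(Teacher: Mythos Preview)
Your overall plan matches the paper's proof exactly: start from the decomposition of Lemma~\ref{Lemma:RegExp}, apply Cauchy--Schwarz to the decision-variance term, and reduce everything to the per-arm quantities $\E[\tau_{k,T}]$ and $\sum_t \Pr[\pi_t=k](1-\Pr[\pi_t=k])$. The pull-count analysis you sketch (three-event decomposition, Lemma~\ref{Lemma:CHB}, threshold $b_k=4c\log T/\Gamma_k^2$) is also the paper's.

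The only real gap is the one you yourself flag. As written, the sentence ``combining these two pointwise bounds delivers $\tfrac{K-1}{4}\Delta_k^2\,\E[\tau_{k,T}]$'' is not a valid inequality: from $p_t(1-p_t)\le 1/4$ and $\sum_t p_t=\E[\tau_{k,T}]$ you cannot conclude $\sum_t p_t(1-p_t)\le \tfrac14\E[\tau_{k,T}]$ in general. The paper resolves this by first writing $p_t(1-p_t)\le \{p_t\wedge \tfrac14\}$ and then \emph{reusing the same time split} as in the pull-count argument: for $t\le b_k$ use the cap $\tfrac14$, contributing $b_k/4=c\log T/\Gamma_k^2$; for $t>b_k$ use $p_t=\Pr[\pi_t=k]\le 4t^{-3}$ (the same tail bound that controls $\E[\tau_{k,T}]$), contributing only an $O(1)$ constant. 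This yields
\[
\sum_{t=1}^T \{\Pr[\pi_t=k]\wedge \tfrac14\}\;\le\;\frac{c\log T}{\Gamma_k^2}+O(1)\;=\;\frac14\cdot\frac{4c\log T}{\Gamma_k^2}+O(1),
\]
which is precisely the $\tfrac14$-times-main-term relation you were aiming for. So the missing ingredient is not a new idea but simply to recycle the $b_k$-threshold argument a second time on $\{p_t\wedge\tfrac14\}$ rather than trying to bound $\sum_t p_t(1-p_t)$ directly by $\E[\tau_{k,T}]$.
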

\emph{Proof.} See Appendix C. 

Theorem~\ref{Thm3} shows a logarithmic upper bound on the distribution-dependent regret of~\MVL~for easy problems where there is a positive gap $\Gamma=\min_{k} \{\Gamma_k:\Gamma_k>0\}$ 
in the mean variances of the best and the second best actions. Notice that when $\Gamma\rightarrow 0$ the upper bound grows to be linear in $T$.

The \CBE~policy is a modification of Improved UCB introduced in~\cite{UCBRev} which proceeds in steps $n=0,1,2,\dots$. At each step $n$, a set of actions $\K_n$, initialized at $\K_0=[K]$, are chosen, each $u_n=\l \frac{C\log T}{\Gahat_n^{2}} \r$ times where $\Gahat_n=\Gahat_02^{-n}$ is initialized at $\Gahat_0>0$ and $C>0$ is a constant that depends only on the distribution class parameter $\alpha$. At each step, a number of actions are potentially removed from $\K_n$ based on upper and lower confidence bounds on their mean-variance, respectively, in the from of $\MVbar_{k}^{(n)}+\frac{\hat{\Gamma}_n}{4}$ and $\MVbar_{j}^{(n)}-\frac{\hat{\Gamma}_n}{4}$, where $\MVbar_{k}^{(n)}$ is the sample mean-variance obtained from the $u_n$ observations at step $n$. 
If the lower confidence bound of action $k$ is bigger than the minimum of the upper confidence bounds of all other remaining actions, action $k$ is removed  $\K_{n+1}=\K_n\setminus\{k\}$; see lines 6-10 in Algorithm~\ref{AlgMVP}.

Let $n_k=\min\{n:\Gahat_n\le\Gamma_k\}$ and $n_{\max}$ be the number of steps taken in \CBE. Let $\Delta_{\max}=\max_{k\in[K]\setminus*}|\Delta_k|$. 
\begin{theorem}\label{Thm4}
The risk-averse regret performance of \CBE~policy, for $C\ge\frac{64}{\alpha}$, satisfies
{\small{\begin{eqnarray}\nn
&&\hspace{-3em}R^{\pi^{\CBE}}(T)\\\nn
&&\hspace{-3em}\le \Sumnotkstar \left(\frac{\frac{4C}{3}\log T}{\Gamma_k^2}+\log_2\left(\frac{1}{\Gamma_k}\right) +\frac{K\log_2T+2}{T^3}\wedge T\right) \Gamma_k\\\nn
&+&
\frac{1}{2}\log_2T\Delta_{max}^2\Sumnotkstar\bigg( \bigg(\frac{C\log T}{\Gamma_k^2}+1\bigg)\I[n_k\le n_{\max}]\\\nn
&+&
\bigg(\frac{\frac{C}{4}\log T}{\Gamma_k^2}+1\bigg)\I[n_k-1\le n_{\max}]\bigg)\\
&+&\left(\frac{K\log_2 T+2}{T^4}+\frac{K\log_2 T}{T}\right)\left(\frac{(K-1)^2T\Delta_{\max}^2}{4}\right).~~~~~
\end{eqnarray}}}
\end{theorem}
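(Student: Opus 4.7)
The plan is to invoke Lemma~\ref{Lemma:RegExp} to split the regret into the action-selection term $\sum_k \E[\tau_{k,T}]\Gamma_k$ and the decision-variance term $\sum_t \E[(\sum_{k\ne k^*}(\I[\pi_t=k]-\Pr[\pi_t=k])\Delta_k)^2]$, then bound each using the structure of \CBE. Thanks to $\Delta_{k^*}=0$, setting $Z_t=\Delta_{\pi_t}$ makes the inner square equal to $(Z_t-\E[Z_t])^2$, so the decision-variance term simplifies to $\sum_t \mathrm{Var}(\Delta_{\pi_t})$, which is a cleaner starting point.

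For the action-selection term I would carry out the standard improved-UCB-style elimination analysis. Define the clean event $\mathcal{E}$ on which $|\MVbar_k^{(n)}-\MV_k|\le \Gahat_n/8$ for every surviving arm $k\in\K_n$ and every active step $n\le n_{\max}$. By Lemma~\ref{Lemma:CHB} applied with $u_n=\lceil C\log T/\Gahat_n^2\rceil$ and the assumption $C\ge 64/\alpha$, each individual estimate fails with probability at most $2/T^4$, so a union bound over the at most $K$ surviving arms and the at most $\log_2 T$ active steps gives $\Pr[\mathcal{E}^c]\le (K\log_2 T + 2)/T^4$. On $\mathcal{E}$, the elimination rule forces every suboptimal $k$ out of $\K_n$ by the end of step $n_k$, while $k^*$ is never eliminated, so $\tau_{k,T}\le \sum_{n\le n_k}u_n$. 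Summing the geometric series and using $\Gahat_{n_k-1}>\Gamma_k$ yields $\sum_{n\le n_k}u_n \le \tfrac{4C}{3}\tfrac{\log T}{\Gamma_k^2}+\log_2\tfrac{1}{\Gamma_k}+O(1)$. Off $\mathcal{E}$ the crude bound $\tau_{k,T}\le T$ contributes the $T\cdot\Pr[\mathcal{E}^c]$ piece, producing the first line of the stated bound after weighting by $\Gamma_k$.

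For the decision-variance term I would exploit the key structural feature of \CBE: conditional on $\K_n$, the sequence of arms pulled during step $n$ is deterministic (each $k\in\K_n$ is pulled $u_n$ consecutive times in a fixed order), so all randomness in $\pi_t$ during step $n$ is inherited from $\K_n$. Using the bound $\mathrm{Var}(\Delta_{\pi_t})\le \E[\Delta_{\pi_t}^2]=\sum_k \Pr[\pi_t=k]\Delta_k^2$ and grouping time by step and by arm, on $\mathcal{E}$ arm $k\ne k^*$ can only contribute at steps $n\le n_k$. The contributions from the two last active steps ($n=n_k$ and $n=n_k-1$) give the two indicator terms in the theorem, with $u_{n_k}\le C\log T/\Gamma_k^2 + 1$ and $u_{n_k-1}\le (C/4)\log T/\Gamma_k^2 + 1$ obtained from $\Gahat_{n_k}>\Gamma_k/2$ and $\Gahat_{n_k-1}>\Gamma_k$ respectively. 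Summing per-step contributions across the at most $\log_2 T$ active steps produces the $\tfrac{1}{2}\log_2 T\cdot\Delta_{\max}^2$ prefactor. Off $\mathcal{E}$ one controls the variance by the crude bound $(K-1)^2\Delta_{\max}^2/4$ per time step, which together with $\Pr[\mathcal{E}^c]\le (K\log_2 T + 2)/T^4$ yields the correction $(K\log_2 T+2)/T^4\cdot (K-1)^2 T\Delta_{\max}^2/4$; the residual $K\log_2 T/T$ contribution absorbs the rounding slack in the ceiling functions summed across steps.

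The main obstacle is the decision-variance bound. The action-selection term is a routine adaptation of the improved-UCB template, but the decision-variance term has no analog in risk-neutral bandits and requires careful tracking of the joint randomness of $\K_n$ and the deterministic within-step scheduling, delivering a bound whose gap dependence is $\Delta_k^2$ rather than $\Gamma_k$ and which carries an extra $\log_2 T$ factor from summing per-step variances. Threading the low-probability event $\mathcal{E}^c$ through both pieces so the correction terms scale as $K\log_2 T/T^3$ and $K\log_2 T/T$, rather than linearly in $T$, is the delicate bookkeeping that completes the proof.
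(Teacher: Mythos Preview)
Your core idea for the decision-variance term---bounding $\mathrm{Var}(\Delta_{\pi_t})\le \E[\Delta_{\pi_t}^2]=\sum_k\Pr[\pi_t=k]\Delta_k^2$ and summing to obtain $\sum_{k\ne k^*}\E[\tau_{k,T}]\Delta_k^2$---is valid and in fact \emph{simpler and tighter} than the paper's route: it recycles the bound on $\E[\tau_{k,T}]$ from the first term and avoids the extra $\log_2 T$ factor entirely. The paper instead applies Cauchy--Schwarz to get $(K-1)\sum_{k\ne k^*}\sum_t\Pr[\pi_t=k](1-\Pr[\pi_t=k])\Delta_k^2$ and then exploits that $p(1-p)$ is small both when $p\approx 0$ (arm already eliminated) \emph{and} when $p\approx 1$ (arm not yet eliminated). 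To control the latter regime the paper proves a separate ``Fact~3'': at any step $n$ with $\Gahat_n>4\Gamma_k$, arm $k$ is eliminated with probability at most $K/T$. Summed over $\le\log_2 T$ steps, this is exactly the source of the $K\log_2 T/T$ correction. Your clean-event argument does not establish anything analogous, and the claim that this term ``absorbs the rounding slack in the ceiling functions'' is wrong.

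Consequently your write-up is internally inconsistent: once you commit to $\mathrm{Var}\le\E[\Delta_{\pi_t}^2]$, \emph{every} step $n\le n_k$ contributes (with weight $u_n\Delta_k^2$), not just the last two, and the geometric sum $\sum_{n\le n_k}u_n$ gives $\tfrac{4C}{3}\log T/\Gamma_k^2$ without any $\log_2 T$ prefactor---so the sentence ``summing per-step contributions across the $\log_2 T$ active steps produces the $\tfrac12\log_2 T\,\Delta_{\max}^2$ prefactor'' does not follow from your bound. A separate technical slip: with $C\ge 64/\alpha$ and confidence radius $\Gahat_n/8$, Lemma~\ref{Lemma:CHB} gives only $2\exp(-\alpha C\log T/64)\le 2/T$ per estimate, not $2/T^4$; the paper obtains $T^{-4}$ using radius $\Gahat_n/4$ (its Facts~1--2), and reserves the $\Gahat_n/8$ radius precisely for Fact~3 where the weaker $K/T$ bound is all that is claimed. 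The clean fix is to drop the attempt to reproduce the stated constants, use radius $\Gahat_n/4$ for $\mathcal{E}$, and carry your second-moment bound through to $\sum_{k\ne k^*}\E[\tau_{k,T}]\Delta_k^2$, which dominates the theorem's variance terms.
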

\emph{Proof.} See Appendix D. 

Theorem~\ref{Thm3} shows a logarithmic upper bound on the distribution-dependent regret of~\CBE. The worst case regret of \CBE~corresponds to the cases where there exists a $k$ with $\Gamma_k=\Theta (\frac{1}{\sqrt T})$. Unlike \MVL, \CBE~recovers the sublinear regret for the smaller orders of $\Gamma_k$. Specifically, with equally good actions in terms of their mean variance, \CBE~has a $0$ regret which is not the case with \MVL~, as it is shown in the simulations section. 
 \begin{algorithm}[h]
\begin{algorithmic}[1]
    \State Initialization: $\Gahat_0=1$, $n=0$, $T\in \N$, $\K_0=[K]$.
    \vspace{.5em}
\While{time is left}
        \State $\K_{n+1}=\K_n$
        \State $u_n=\l \frac{C\log T}{\Gahat_n^{2}} \r$. 
        \State Choose each action $k\in \K_n$ for $u_n$ times. 
        \For{$k\in \K_n$}
            \If{
            $
            \MVbar_{k}^{(n)}- \frac{\Gahat_n}{4}> \min_{j\in\K_n}\MVbar_{j}^{(n)} +\frac{\Gahat_n}{4}
            $}
                \State Remove action $k$: $\K_{n+1} \leftarrow \K_{n+1}\setminus\{k\}$.
            \EndIf
        \EndFor
\State n=n+1
\State $\Gahat_{n+1}=\frac{\Gahat_n}{2}$
\EndWhile
\vspace{.5em}
\end{algorithmic}
\caption{\CBE~Policy.}\label{AlgMVP}
\end{algorithm}

\subsection{The Full Information Setting}

Full information from actions renders the need for bandit exploration obsolete. The simple Follow the Leader (FL) policy is a common policy in the risk-neutral problem. A straightforward modification of FL for risk-averse problem gives us the policy
\begin{eqnarray}
\pi_t^{\MVFL} = \argmin \MVbar_{k,t}. 
\end{eqnarray}

\begin{theorem}
The risk-averse regret performance of \MVFL~ satisfies
\begin{eqnarray}\nn
&&\hspace{-3em}R^{\pi^{MV-FL}}(T)\le
\\
&&\hspace{-3em}\left(\frac{4}{\alpha\Gamma^2}(\log K+1)+1\wedge T\right) \left(\Gamma+\frac{(K-1)\Delta_{\max}^2}{4}  \right).~~~
\end{eqnarray}
\end{theorem}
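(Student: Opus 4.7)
The plan is to reduce the analysis to Lemma~\ref{Lemma:RegExp}, so the target bound naturally splits into a suboptimal-play piece $\sum_k \E[\tau_{k,T}]\Gamma_k$ and a decision-variance piece. The structural advantage of full information is that at every round $t$ each arm has already accumulated exactly $t$ i.i.d.\ samples regardless of $\pi$, so Lemma~\ref{Lemma:CHB} applied to $\MVbar_{k,t}$ gives concentration at the deterministic rate $t$, and no exploration bonus is required. This is what converts the $\log T$ of the bandit bound into the $\log K$ of the full-information bound.

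The core estimate is a one-step tail bound on $\Pr[\pi_t = k]$ for each suboptimal $k$. Since $\{\pi_t = k\}$ is contained in $\{\MVbar_{k,t} \le \MVbar_{k^*,t}\}$ and $MV_k - MV_{k^*} = \Gamma_k \ge \Gamma$, inserting the true mean-variances and splitting at the midpoint forces either $\MVbar_{k,t} - MV_k \le -\Gamma_k/2$ or $\MVbar_{k^*,t} - MV_{k^*} \ge \Gamma_k/2$; Lemma~\ref{Lemma:CHB} with $\delta = \Gamma_k/2$ then gives $\Pr[\pi_t = k] \le 4\exp(-c t \Gamma_k^2)$ with $c$ proportional to $\alpha$. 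A union bound over the $K-1$ suboptimal arms yields $\Pr[\pi_t \ne k^*] \le 4(K-1)\exp(-c t\Gamma^2)$. I would then split the horizon at $t^\star$ of order $(\log K + 1)/(c\Gamma^2)$: the prefix $\{t \le t^\star\}$ contributes at most $t^\star$ by bounding the probability trivially by $1$, while the tail $\{t > t^\star\}$ contributes only $O(1)$ via a geometric sum, producing the claimed factor $\frac{4}{\alpha\Gamma^2}(\log K + 1) + 1$ capped by $T$.

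Multiplying this tail sum by the worst gap closes out the first piece of the regret decomposition. For the decision-variance piece I would expand $\E[(\sum_{k \ne k^*}(\I[\pi_t = k] - p_{k,t})\Delta_k)^2]$ as $\sum_k p_{k,t}\Delta_k^2 - (\sum_k p_{k,t}\Delta_k)^2$ and combine two complementary bounds: a deterministic per-step cap $(K-1)\Delta_{\max}^2/4$ from the termwise Bernoulli variance $p_{k,t}(1-p_{k,t}) \le 1/4$, and the probability-weighted bound $\Delta_{\max}^2 \Pr[\pi_t \ne k^*]$ from $\sum_{k \ne k^*} p_{k,t} = \Pr[\pi_t \ne k^*]$. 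Pairing these with the $t^\star$ decomposition yields the $(K-1)\Delta_{\max}^2/4$ factor in the theorem.

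The main obstacle will be constant bookkeeping: the $(2+\lambda)^2$ denominator in Lemma~\ref{Lemma:CHB}, the factor $4$ from the midpoint split, and the geometric tail beyond $t^\star$ all have to cooperate to recover the exact leading constant $\frac{4}{\alpha\Gamma^2}$. The replacement of $\Gamma_k$ by the uniform $\Gamma$ in the first summand is only lossless up to a constant but is absorbed cleanly, since the exponential rate is already dictated by the worst gap~$\Gamma$. No new ideas beyond those used for Theorem~\ref{Thm3} are required; the analysis is strictly cleaner here because the random play count $\tau_{k,t}$ of the bandit setting is replaced throughout by the deterministic $t$.
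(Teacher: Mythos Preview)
The paper does not actually supply a proof for this theorem; it is stated in Section~4.2 without an accompanying appendix. The closest analogue the paper does prove is Theorem~\ref{Thm3} for \MVL\ in Appendix~C, and your proposal is precisely the natural full-information simplification of that argument: the same regret decomposition via Lemma~\ref{Lemma:RegExp}, the same midpoint split on $\{\MVbar_{k,t}\le\MVbar_{k^*,t}\}$, the same Bernoulli-variance cap $p(1-p)\le\tfrac14$ for the decision-variance term, with the random $\tau_{k,t}$ replaced by the deterministic $t$. This is exactly the route one would expect the authors to have in mind, and your plan is sound.

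Two small remarks. First, your observation about constants is well taken: Lemma~\ref{Lemma:CHB} carries a $(2+\lambda)^2$ in the exponent, so the clean $\tfrac{4}{\alpha\Gamma^2}$ in the stated bound either absorbs this into $\alpha$ or is slightly loose as written; this is a cosmetic issue in the theorem statement rather than a flaw in your argument. Second, a minor indexing point: $\pi_t^{\MVFL}$ should depend on $\MVbar_{k,t-1}$ (observations through time $t-1$), not $\MVbar_{k,t}$ as literally written in the paper, but this shift by one does not affect the bound. Your proposal is correct.
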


Parallel to the bandit information setting, a more structured policy based on action elimination is expected to offer a better risk-averse regret. Specifically, the same CB-AE policy can be used in the full information setting with two changes: first, the sample mean-variance is calculated based on full information available at each step, second, leveraging the full information the value of $u_n$ is reduced to $u_n=\l \frac{C\log T}{|\K_n|\Gahat_n^{2}} \r$.

\section{Simulations} \label{sec.sims}

In this section, we provide simulation results on the performance of \MVL, \CBE, and \MVFL.  
We compare the performance of \MVL~and \CBE~in Figure~\ref{Sim1}. As it is expected, \CBE~shows a better regret performance in the simulations in comparison to \MVL. The reason is that \CBE,~by fixing the action elimination structure, reduces the variance in the decisions. 
While both policies show a linear worst case regret performance, \MVL~ has a linear regret performance for all the settings where there exists a $k\neq k^*$ with $\Gamma_k=\Or(\frac{1}{\sqrt T})$ and $\Delta_k>>0$. 
On the other hand, \CBE,~as it can be seen from the upper bound in~Theorem~\ref{Thm4}, has a linear regret for the particular case of $\Gamma_k=\Theta(\frac{1}{\sqrt T})$ and $\Delta_k>>0$. Specifically, the \CBE~policy recovers the sublinear regret for the smaller values of $\Gamma_k$ (when $\Gamma_k\rightarrow 0$).

Figure~\ref{Sim.fullfeedback} shows the comparison of \MVFL~ and \CBE~under full feedback setting. 
While for easy models with relatively large $\Gamma$, \MVFL~ works well and has a sublinear regret, with $\Gamma\rightarrow 0$ the regret grows to linear with time. \CBE~, on the other hand, recovers the sublinear regret when $\Gamma\rightarrow 0$. 

\begin{figure} [h]
\centering
\mbox {
\subfigure[$\Gamma=0.50$]{\includegraphics[width=0.22\columnwidth]{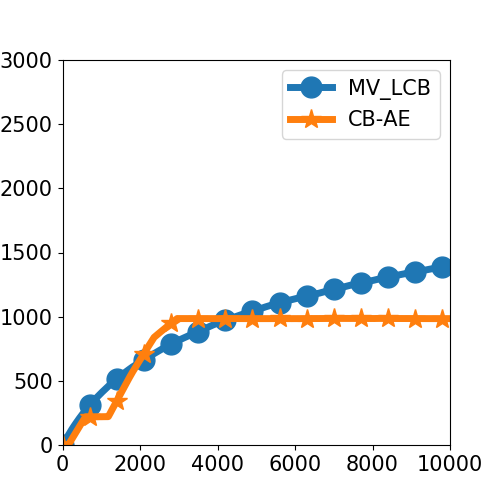}}
\subfigure[$\Gamma=0.20$]{\includegraphics[width=0.22\columnwidth]{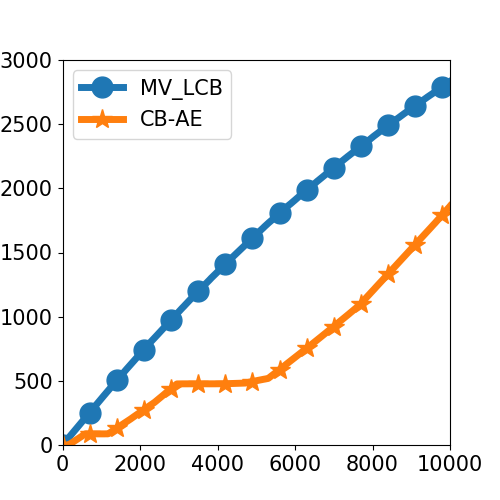}}
\subfigure[$\Gamma=0.10$]{\includegraphics[width=0.22\columnwidth]{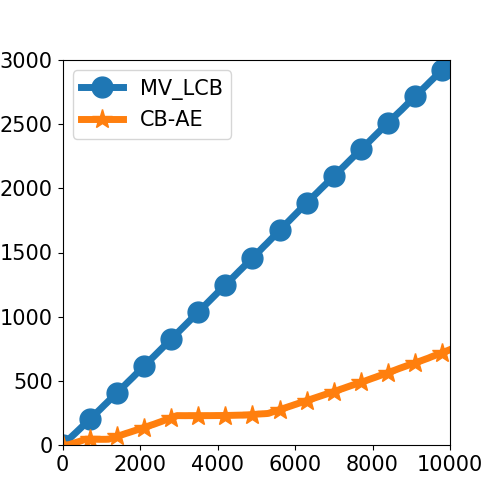}}
} \mbox {
\subfigure[$\Gamma=0.05$]{\includegraphics[width=0.22\columnwidth]{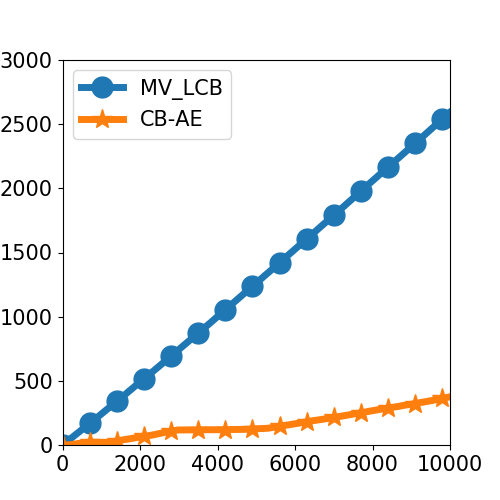}}
\subfigure[$\Gamma=0.01$]{\includegraphics[width=0.22\columnwidth]{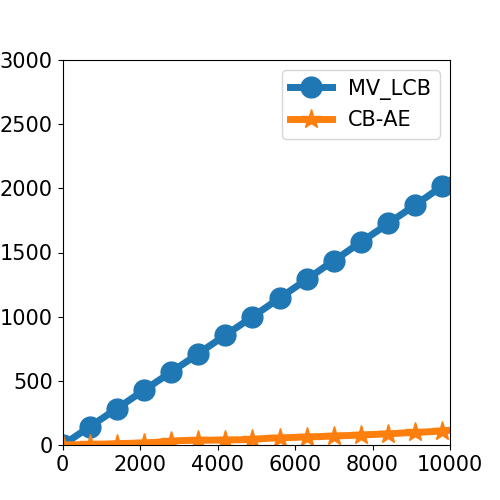}}
\subfigure[$\Gamma=0.00$]{\includegraphics[width=0.22\columnwidth]{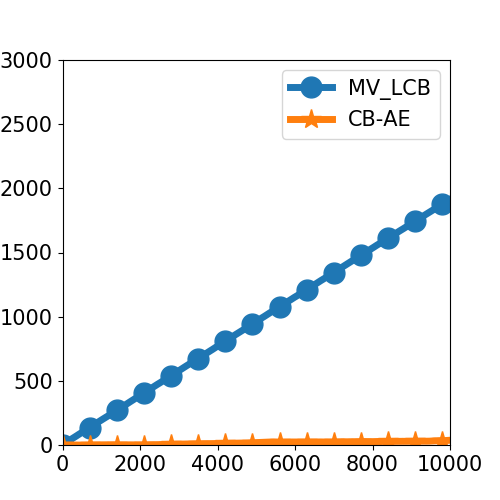}}
}
\caption{Comparison of the performance of \MVL~and~\CBE~in terms of their regret over time for different values of $\Gamma$.}\label{Sim1}
\end{figure}

In this simulation, $K=4$ actions are Binomially distributed with mean $\mu_*=1$ and variance $\sigma_*^2 = 1$ for the optimal action. For other actions we choose $\mu_k=2$ and vary the variance over the set $\{2.5, 2.2, 2.1, 2.05, 2.01, 2.0\}$ simulating different $\Gamma$ values. The time horizon is varied from $T=1$ to $T=10000$ and the regret curves are average performance over $1000$ Monte Carlo runs. The parameters for \MVL~and \CBE~are $c=1$, $\Gamma_0=1$, and $C=16$.

\begin{figure}[h] 
\centering
\mbox {
\subfigure[$\Gamma=0.50$]{\includegraphics[width=0.22\columnwidth]{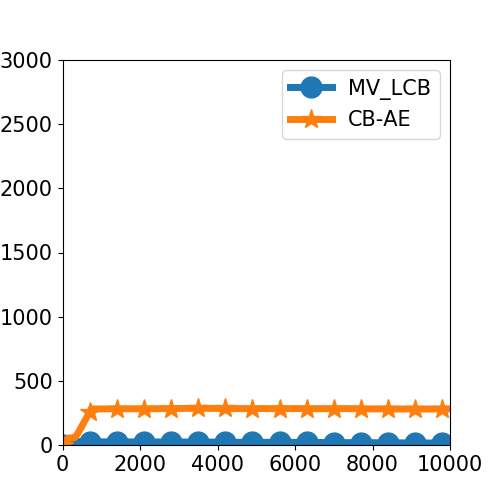}}
\subfigure[$\Gamma=0.20$]{\includegraphics[width=0.22\columnwidth]{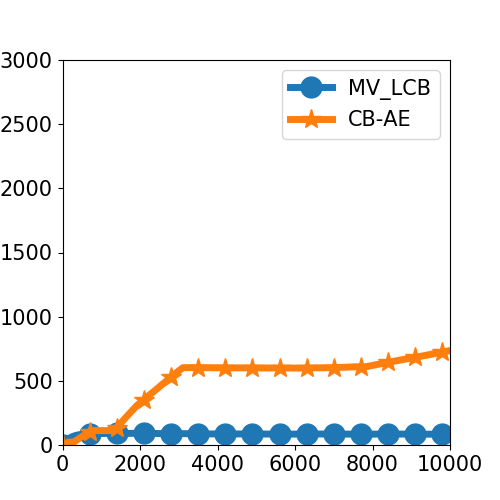}}
\subfigure[$\Gamma=0.10$]{\includegraphics[width=0.22\columnwidth]{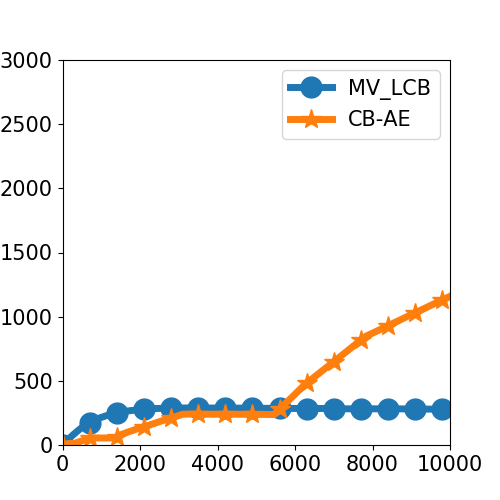}}
} \mbox {
\subfigure[$\Gamma=0.05$]{\includegraphics[width=0.22\columnwidth]{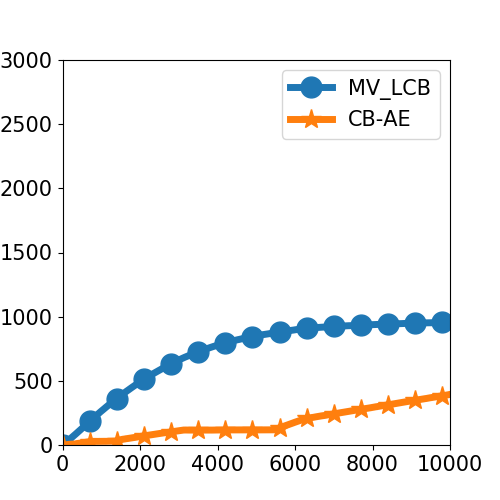}}
\subfigure[$\Gamma=0.01$]{\includegraphics[width=0.22\columnwidth]{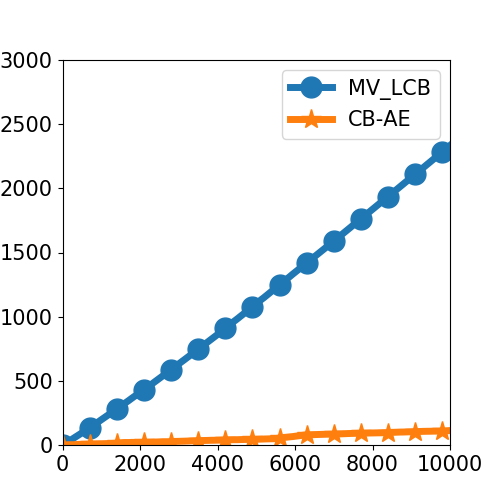}}
\subfigure[$\Gamma=0.00$]{\includegraphics[width=0.22\columnwidth]{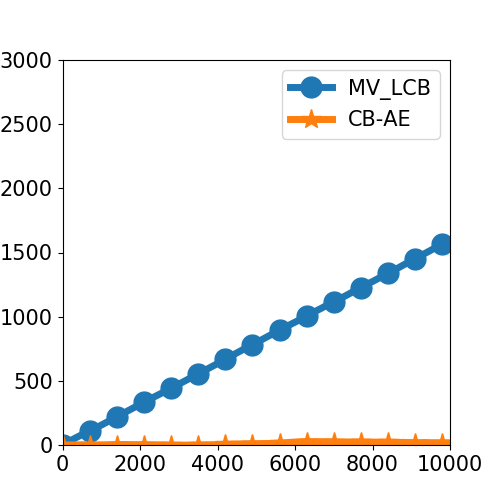}}
}
\caption{Comparison of the performance of \MVFL~and~\CBE~in terms of their regret over time for different values of $\Gamma$.}\label{Sim.fullfeedback}
\end{figure}

\section{Conclusion}
In this paper, we studied online learning problems under a mean-variance measure. We showed that a dominant term in risk-averse regret comes from the variance in the decisions. We established fundamental limits on learning policies; while a logarithmic distribution-dependent regret is achievable by UCB and FL type policies, similar to the risk-neutral settings, an $\Omega(T)$ worst case regret is inevitable in contrast to the $\Omega(\sqrt T)$ counterpart lower bound in the risk-neutral setting.

\section*{Appendix A}
\begin{proof}[Proof of Lemma~\ref{Lemma:RegExp}]
We analyze the mean and the variance of the observed reward at time $t$ under policy $\pi$. For the $\E[X_{\pi_t,t}]$ we have:
\begin{eqnarray}\nn
\E[X_{\pi_t,t}]
&=& \E[\sum_{k=1}^K\I[\pi_{t}=k]X_{k,t}]\\\label{Lin1}
&=&\sum_{k=1}^K\E[\I[\pi_{t}=k]X_{k,t}]\\\label{Con1}
&=&\sum_{k=1}^K\E\bigg[\E\bigg[\I[\pi_{t}=k]X_{k,t}\bigg|\I[\pi_{t}=k]\bigg]\bigg] \\\nn
&=&\sum_{k=1}^K\E\bigg[\I[\pi_{t}=k]\E\bigg[X_{k,t}\bigg|\I[\pi_{t}=k]\bigg]\bigg]\\\nn
&=&\sum_{k=1}^K\E\bigg[\I[\pi_{t}=k]\mu_k\bigg]\\\label{MN1}
&=&\sum_{k=1}^K\Pr[\pi_t=k]\mu_k.
\end{eqnarray}
Equation~\eqref{Lin1} comes from the linearity of the expectation and equation~\eqref{Con1} is a result of the property of the conditional expectation that for two random variables $Y$ and $Z$: $\E[YZ]=\E[\E[YZ|Z]]$.  

For the variance of $X_{\pi_t,t}$, we have
{\small{\begin{eqnarray}\nn
\E\bigg[\bigg(X_{\pi_{t},t}-\E[X_{\pi_{t},t}]\bigg)^2\bigg]
&=&
\E\bigg[\bigg(\sum_{k=1}^K\I[\pi_{t}=k]X_{k,t}-\E[\sum_{k=1}^K\I[\pi_{t}=k]X_{k,t}]\bigg)^2\bigg] \\\nn
&=&\E\bigg[\bigg( \sum_{k=1}^K\I[\pi_{t}=k]X_k-\sum_{k=1}^K\I[\pi_{t}=k]\mu_k\\\nn
&&~~~+ \sum_{k=1}^K\I[\pi_{t}=k]\mu_k - \sum_{k=1}^K \Pr[\pi_{t}=k]\mu_k \bigg)^2\bigg] \\ \nn
&=&\E\bigg[\bigg( \sum_{k=1}^K\I[\pi_{t}=k](X_k-\mu_k)\\\nn
&&~~~+ 
\sum_{k=1}^K (\I[\pi_{t}=k]-\Pr[\pi_{t}=k])\mu_k \bigg)^2\bigg] \\ \nn
&=&
\underbrace{\E\left[\left( \sum_{k=1}^K\I[\pi_{t}=k](X_k-\mu_k) \right)^2\right]}_{\text{The first term}} \\\nn
&&~~~+ 
\underbrace{\E\left[\left(\sum_{k=1}^K (\I[\pi_{t}=k]-\Pr[\pi_{t}=k])\mu_k \right)^2\right]}_{\text{The second term}} \\\label{X11}
&&~~~+ \tiny{
\underbrace{2 \E\left[\left( \sum_{k=1}^K\I[\pi_{t}=k](X_k-\mu_k) \right)\left(\sum_{k=1}^K (\I[\pi_{t}=k]-\Pr[\pi_{t}=k])\mu_k \right)\right]}_{\text{The third term}}}.~~~
\end{eqnarray}}}
We analyze the three term in~\eqref{X11} separately. 

The first term: 
{\small{\begin{eqnarray}\nn
&&\hspace{-9em}{\E\left[\left( \sum_{k=1}^K\I[\pi_{t}=k](X_{k,t}-\mu_k) \right)^2\right]}\\\nn
&&\hspace{-4em}= \E\left[\left( \sum_{j=1}^K\I[\pi_{t}=j](X_{j,t}-\mu_j) \right)\left( \sum_{k=1}^K\I[\pi_{t}=k](X_{k,t}-\mu_k) \right)\right]\\\nn
&&\hspace{-4em}= \E\left[ \sum_{j=1}^K\sum_{k=1}^K\I[\pi_{t}=j]\I[\pi_{t}=k](X_{j,t}-\mu_j) (X_{k,t}-\mu_k)\right]\\\nn
&&= \sum_{j=1}^K\sum_{k=1}^K\E\bigg[ \I[\pi_{t}=j]\I[\pi_{t}=k](X_{j,t}-\mu_j) (X_{k,t}-\mu_k)\bigg]\\\nn
&&\hspace{-4em}=\sum_{k=1}^K\E\bigg[ \I[\pi_{t}=k] (X_{k,t}-\mu_k)^2\bigg]\\\nn
&&\hspace{-4em}~~~+\sum_{j=1}^K\sum_{\substack{k=1 \\ k\neq j}}^K\E\bigg[ \I[\pi_{t}=j]\I[\pi_{t}=k](X_{j,t}-\mu_j) (X_{k,t}-\mu_k)\bigg]\\\nn
\\\label{Tr1}
&&\hspace{-4em}=\sum_{k=1}^K \Pr[\pi_{t}=k]\sigma^2_k. 
\end{eqnarray}}}
The last equality is proven similar to \eqref{MN1}.

The second term:
{\small{\begin{eqnarray}\nn
&&\hspace{-7em}\E\left[\left(\sum_{k=1}^K (\I[\pi_{t}=k]-\Pr[\pi_{t}=k])\mu_k \right)^2\right]\\\nn
&&\hspace{-3em}=\E\Bigg[\bigg(\sum_{\substack{k=1 \\k\neq k^*}}^K (\I[\pi_{t}=k]-\Pr[\pi_{t}=k])\mu_k \\\nn
&&\hspace{-3em}~~~+ (\I[\pi_{t}=k^*]-\Pr[\pi_{t}=k^*])\mu_{k^*}\bigg)^2\bigg]\\\label{su1}
&&\hspace{-3em}=\E\bigg[\bigg(\sum_{\substack{k=1 \\k\neq k^*}}^K (\I[\pi_{t}=k]-\Pr[\pi_{t}=k])\mu_k \\\nn
&&\hspace{-3em}~~~+
\bigg(1-
\sum_{\substack{k=1 \\k\neq k^*}}^K \I[\pi_{t}=k]-(1-\sum_{\substack{k=1 \\k\neq k^*}}^K\Pr[\pi_{t}=k])\bigg)\mu_{k^*}\bigg)^2\bigg]~~~~\\\label{Tr2}
&&\hspace{-3em}=\E\left[\left(\sum_{\substack{k=1 \\k\neq k^*}}^K (\I[\pi_{t}=k]-\Pr[\pi_{t}=k])\Delta_k\right)^2\right].
\end{eqnarray}}}
The equation~\eqref{su1} holds because $\sum_{k=1}^K\I[\pi_t=k]=1$ and $\sum_{k=1}^K\Pr[\pi_t=k]=1$.

The third term: 
{\small{\begin{eqnarray}\nn
&&\hspace{-3em}\tiny{\E\left[\left( \sum_{k=1}^K\I[\pi_{t}=k](X_k-\mu_k) \right)\left(\sum_{k=1}^K (\I[\pi_{t}=k]-\Pr[\pi_{t}=k])\mu_k \right)\right] }\\ \nn
&=&\E\bigg[\E\bigg[\bigg( \sum_{k=1}^K\I[\pi_{t}=k](X_k-\mu_k) \bigg)\bigg(\\\nn
&&\sum_{k=1}^K (\I[\pi_{t}=k]-\Pr[\pi_{t}=k])\mu_k \bigg)\bigg|\I[\pi_{t}=k] \bigg]\bigg]\\\label{Tr3}
&=& 0.
\end{eqnarray}}}

Combining~\eqref{MN1},~\eqref{X11},~\eqref{Tr1},~\eqref{Tr2},~\eqref{Tr3}, we have 
\begin{eqnarray}\nn
\MV(X_{\pi_t,t})
&=& \sum_{k=1}^K \Pr[\pi_t=k]\MV_k \\\nn
&+&  \E\left[\left(\sum_{\substack{k=1 \\k\neq k^*}}^K (\I[\pi_{t}=k]-\Pr[\pi_{t}=k])\Delta_k\right)^2\right].
\end{eqnarray}
Summing up the mean variance of observations at each time and subtracting that of the optimal single arm strategy we arrive at~\eqref{RegUB}.
\end{proof}

\section*{Appendix B}
\begin{proof}[Proof of Lemma~\ref{Thm1}]

The following lemma is used in establishing the lower bound for worst case regret under risk-averse setting. 
\begin{lemma}\label{KLLemma}
Let $\nu$ and and $\nu'$ be two probability distributions supported on some set $\X$ with $\nu'$ absolutely continuous with respect to $\nu$. For any measurable function $\phi: \X\rightarrow \{0,1\}$, we have
\begin{eqnarray}
\Pr_{\nu}(\phi(X)=1) + \Pr_{\nu'}(\phi(X)=0)\ge \frac{1}{2}\exp(-\KL(\nu,\nu')).
\end{eqnarray}
\end{lemma}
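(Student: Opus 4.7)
The plan is to establish this as the Bretagnolle--Huber inequality, reducing the left-hand side to an integral of the pointwise minimum of the two densities and then sandwiching that integral between a Cauchy--Schwarz bound and a Jensen bound applied to $\int \sqrt{pq}\,d\mu$.

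First, I would pass to densities with respect to a common dominating measure $\mu$: let $p = d\nu/d\mu$ and $q = d\nu'/d\mu$, which exist since $\nu'$ is absolutely continuous with respect to $\nu$ (and one can take, e.g., $\mu = \nu + \nu'$). Setting $A = \{x : \phi(x) = 1\}$, the pointwise bounds $p \ge p \wedge q$ on $A$ and $q \ge p \wedge q$ on $A^c$ give
\[
\Pr_\nu(\phi = 1) + \Pr_{\nu'}(\phi = 0) \;=\; \int_A p\,d\mu + \int_{A^c} q\,d\mu \;\ge\; \int (p \wedge q)\,d\mu.
\]
This reduces the claim to showing $\int (p \wedge q)\,d\mu \ge \tfrac{1}{2}\exp(-\KL(\nu,\nu'))$, with the function $\phi$ no longer appearing.

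Next, I would use $\int \sqrt{pq}\,d\mu$ as a bridge. Jensen's inequality applied to the concave logarithm gives
\[
\log \int \sqrt{pq}\,d\mu \;=\; \log \E_\nu\!\left[\sqrt{q/p}\right] \;\ge\; \E_\nu\!\left[\tfrac{1}{2}\log(q/p)\right] \;=\; -\tfrac{1}{2}\KL(\nu,\nu'),
\]
so $\int \sqrt{pq}\,d\mu \ge \exp(-\tfrac{1}{2}\KL(\nu,\nu'))$. On the other hand, writing $\sqrt{pq} = \sqrt{p \wedge q}\,\sqrt{p \vee q}$ and applying Cauchy--Schwarz yields
\[
\Bigl(\int \sqrt{pq}\,d\mu\Bigr)^{2} \;\le\; \int (p \wedge q)\,d\mu \cdot \int (p \vee q)\,d\mu \;\le\; 2\int (p \wedge q)\,d\mu,
\]
where the last step uses the identity $(p \wedge q) + (p \vee q) = p + q$ and $\int p\,d\mu = \int q\,d\mu = 1$ to bound $\int (p \vee q)\,d\mu \le 2$. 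Combining the two displayed inequalities delivers $\int (p \wedge q)\,d\mu \ge \tfrac{1}{2}\exp(-\KL(\nu,\nu'))$, and together with the initial reduction this is the claim.

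The argument is almost entirely routine; the one mildly delicate point is the opening reduction, where one must see that replacing $p$ on $A$ (and $q$ on $A^c$) by the pointwise minimum only decreases the sum of the two integrals, so that bounding $\int (p \wedge q)\,d\mu$ from below is indeed the right target. After that step, Jensen supplies the KL divergence, Cauchy--Schwarz converts $\int \sqrt{pq}\,d\mu$ into a product involving $\int (p \wedge q)\,d\mu$, and the trivial bound $\int (p \vee q)\,d\mu \le 2$ closes the loop with the factor of $1/2$.
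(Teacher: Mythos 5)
Your proof is correct: it is the standard Bretagnolle--Huber argument (reduce to $\int (p\wedge q)\,d\mu$, lower-bound $\int\sqrt{pq}\,d\mu$ by $e^{-\KL/2}$ via Jensen, and convert via Cauchy--Schwarz using $\int(p\vee q)\,d\mu\le 2$). The paper itself gives no proof of this lemma --- it simply cites Bubeck, Perchet and Rigollet --- so there is nothing to diverge from; your write-up supplies exactly the argument that the cited reference (and, e.g., Tsybakov's book) uses. Two small remarks. First, the existence of the densities $p,q$ with respect to $\mu=\nu+\nu'$ follows from Radon--Nikodym alone and does not require the stated absolute-continuity hypothesis; that hypothesis (and really the reverse one, $\nu\ll\nu'$) only matters for $\KL(\nu,\nu')=\E_\nu[\log(p/q)]$ to be finite, and when it fails the inequality is vacuously true since the right-hand side is $0$. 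Second, in the Jensen step one should note that the set $\{p=0\}$ contributes nothing to either side of $\int\sqrt{pq}\,d\mu=\E_\nu[\sqrt{q/p}]$, so the identification is legitimate; you implicitly handle this correctly.
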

Notation $\Pr_{\nu}(.)$ denotes the probability measure with respect to $\nu$ and notation $\KL(\nu,\nu')$ denotes the Kullback-Leibler divergence between $\nu$ and $\nu'$ defined as $\KL(\nu,\nu')=\E_{\nu}[\log\frac{d\nu}{d\nu'}]$. Lemma~\ref{KLLemma} was used in~\cite{BR} to establish a lower bound on the risk-neutral bandit regret with side information. 

For the KL divergence between $f_2$ and $f'_2$, we have
\begin{eqnarray}\nn
&&\hspace{-3em}\KL(f_2,f_2') = p\log\frac{p}{q} + (1-p)\log\frac{1-p}{1-q}\\ \nn
&=& -(\frac{1}{4}+2\Gamma)\log\frac{\frac{1}{4}-2\Gamma}{\frac{1}{4}+2\Gamma} - (\frac{3}{4}-2\Gamma)\log\frac{\frac{3}{4}+2\Gamma}{\frac{3}{4}-2\Gamma}\\\nn
&=& -(\frac{1}{4}+2\Gamma)\log(1-\frac{4\Gamma}{\frac{1}{4}+2\Gamma}) - (\frac{3}{4}-2\Gamma)\log(1+\frac{4\Gamma}{\frac{3}{4}-2\Gamma})\\\nn
&\le& -(\frac{1}{4}+2\Gamma)\left( -\frac{4\Gamma}{\frac{1}{4}+2\Gamma} -\frac{1}{2}(\frac{4\Gamma}{\frac{1}{4}+2\Gamma})^2 
-\frac{1}{3}(\frac{4\Gamma}{\frac{1}{4}+2\Gamma})^3 \right)\\\label{TTE1}
&&~~~~~- (\frac{3}{4}-2\Gamma)\left(
\frac{4\Gamma}{\frac{3}{4}-2\Gamma}
+\frac{1}{2}(\frac{4\Gamma}{\frac{3}{4}-2\Gamma})^2
+\frac{1}{3(1+\frac{1}{8})^3}(\frac{4\Gamma}{\frac{3}{4}-2\Gamma})^3
\right)\\\nn
&=&\Gamma^2\left(
\frac{8}{\frac{1}{4}+2\Gamma}
+\frac{64\Gamma }{3(\frac{1}{4}+2\Gamma)^2 }
-\frac{8}{\frac{3}{4}-2\Gamma}
-\frac{64\Gamma }{3(1+\frac{1}{8})^3(\frac{3}{4}-2\Gamma)^2 }
\right) \\\label{KLB1}
&\le&22\Gamma^2.
\end{eqnarray}
Inequality~\eqref{TTE1} is obtained based on truncated Taylor expansion of $\log(1+x)$ for $x\in(-1,1)$ and the last inequality holds for all $\Gamma\in(0,\frac{1}{8})$.

Let $f_k^{(t)}(x_{k,1},x_{k,2},\dots,x_{k,t})=\Pi_{s=1}^t f_k(x_{k,s})$ denote the joint distribution of the samples drawn from $f_k$. 
\begin{eqnarray}\nn
\{\E_{\F}[\tau_{2,T}]\vee\E_{\F'}[\tau_{1,T}]\}&\ge& \frac{1}{2}\left(\E_{\F}[\tau_{2,T}]+\E_{\F'}[\tau_{1,T}\right)\\\nn
&=&\frac{1}{2}\sum_{t=1}^T\left(\Pr_{\F}[\I[\pi_t=2]]+\Pr_{\F'}[\I[\pi_t=1]]\right)\\ \label{KLU1}
&\ge& \frac{1}{2}\sum_{t=1}^T\exp(-\KL(f_2^{(t)},f_2'^{(t)}))\\\label{X2}
&=&\frac{1}{2}\sum_{t=1}^T\exp(-\sum_{s=1}^t\KL(f_2,f_2'))\\\label{X3}
&\ge&\frac{1}{2}\sum_{t=1}^T\exp(-22t\Gamma^2). 
\end{eqnarray}
Inequality~\eqref{KLU1} is obtained by Lemma~\ref{KLLemma}. Inequality~\eqref{X2} is based on the assumption of i.i.d. samples for each arm over time, and~\eqref{X3} is obtained by replacing the upper bound on the $\KL(f_2,f_2')$ from~\eqref{KLB1}.
To derive the desired lower bound in~\eqref{Thm1In1} we consider 2 cases for $\Gamma$ as follows.

\paragraph*{Case 1} If $\Gamma\le\frac{1}{\sqrt{22T}}$, then
\begin{eqnarray}\label{Ca1}
\frac{1}{2}\sum_{t=1}^T\exp(-22t\Gamma^2)\ge \frac{1}{2e}T.
\end{eqnarray}

\paragraph*{Case 2} If $\Gamma>\frac{1}{\sqrt{22T}}$, then
\begin{eqnarray}\nn
\frac{1}{2}\sum_{t=1}^T\exp(-22t\Gamma^2)
&\ge&\frac{1}{2}\int_{x=1}^T\exp(-22x\Gamma^2)dx\\\nn
&=&\frac{1}{44\Gamma^2}(\exp(-22\Gamma^2)-\exp(-22T\Gamma^2))\\\nn
&=&\frac{1}{44\Gamma^2}\exp(-22\Gamma^2)\left(1-\exp(-22(T-1)\Gamma^2)\right)\\\nn
&\ge&\frac{\exp(-\frac{22}{64})}{44\Gamma^2}(1-\exp(-\frac{T-1}{T}))\\\label{X6}
&\ge&\frac{\exp(-\frac{22}{64})}{44\Gamma^2}(1-\exp(-\frac{99}{100}))\\\label{Ca2}
&\ge&\frac{0.01}{\Gamma^2}.
\end{eqnarray}
Inequality~\eqref{X6} holds for $T\ge 100$. 

Combining~\eqref{X3},~\eqref{Ca1} and~\eqref{Ca2}, we arrive at the theorem. 
\end{proof}


\section*{Appendix C}
\begin{proof}[Proof of Theorem~\ref{Thm3}]
From the regret expression given in~\eqref{RegUB}, we have
\begin{eqnarray}\nn
\Reg(T) 
&=& \sum_{k=1}^K \E[\tau_{k,T}]\Gamma_{k} 
+\sum_{t=1}^T \E\left[\left(\Sumnotkstar (\I[\pi_{t}=k]-\Pr[\pi_{t}=k])\Delta_k\right)^2\right]\\\nn
&\le& \sum_{k=1}^K \E[\tau_{k,T}]\Gamma_{k} 
+ (K-1)\sum_{t=1}^T \Sumnotkstar\E\left[\left( \I[\pi_{t}=k]-\Pr[\pi_{t}=k]\right)^2\right]\Delta_k^2\\\nn
&=&\sum_{k=1}^K \E[\tau_{k,T}]\Gamma_{k} 
+ (K-1)\Sumnotkstar\sum_{t=1}^T  \Pr[\pi_{t}=k](1-\Pr[\pi_{t}=k])\Delta_k^2\\\label{SO}
&=&\sum_{k=1}^K \E[\tau_{k,T}]\Gamma_{k} 
+ (K-1)\Sumnotkstar\sum_{t=1}^T  \{\Pr[\pi_{t}=k]\wedge\frac{1}{4}\}\Delta_k^2.
\end{eqnarray}

Following the similar line in the analysis of the performance of UCB in~\cite{Auer&etal02ML} and mean-variance UCB in~\cite{Vakili2015} let $b_k=\frac{4c\log T}{\Gamma_k^2}$. 
We have
\begin{eqnarray}\nn
&&\hspace{-3em}\MVbar_{k,t}-\sqrt{\frac{c\log t}{\tau_{k,t}}} - (\MVbar_{*,t}-\sqrt{\frac{c\log t}{\tau_{*,t}}})\\\nn
&=&(\MVbar_{k,t}+\sqrt{\frac{c\log t}{\tau_{k,t}}}-\MV_k)
-(\MVbar_{*,t}-\sqrt{\frac{c\log t}{\tau_{*,t}}}-\MV_*)\\\label{Au1}
&+& (\MV_k-\MV_*-2\sqrt{\frac{c\log t}{\tau_{*,t}}} )
\end{eqnarray}
For $\tau_{k,t}\ge b_k$, the third term in~\eqref{Au1} is positive. Thus, when $\tau_{k,t}\ge b_k$,
\begin{eqnarray}\nn
\Pr[\pi_t=k] &=& \Pr[\MVbar_{k,t}-\sqrt{\frac{c\log t}{\tau_{k,t}}} - (\MVbar_{*,t}-\sqrt{\frac{c\log t}{\tau_{*,t}}})\le 0] \\\nn
&\le& \Pr[\MVbar_{k,t}+\sqrt{\frac{c\log t}{\tau_{k,t}}}-\MV_k\le 0]
+
\Pr[\MVbar_{*,t}-\sqrt{\frac{c\log t}{\tau_{*,t}}}-\MV_*\ge 0] \\\nn
&\le& 4\exp(-\frac{\alpha c \log t}{(2+\rho)^2})\\\nn
&\le& 4t^{-3}. 
\end{eqnarray}
Where the last inequality is obtained by Lemma~\ref{Lemma:CHB}.
We thus have
\begin{eqnarray}\nn
\E[\tau_{k,T}]&\le& b_k + \sum_{t=b_k+1}^T 4t^{-3}\\\label{FTT1}
&\le&
\frac{4c\log T}{\Gamma_k^2}+5
\end{eqnarray}
In the second term in~\eqref{SO}, we have
\begin{eqnarray}\nn
\sum_{t=1}^T  \{\Pr[\pi_{t}=k]\wedge\frac{1}{4}\} &\le& \frac{1}{4}b_k + \sum_{t=b_k+1}^T 4t^{-3}\\\label{STT}
&\le&
\frac{c\log T}{\Gamma_k^2}+5
\end{eqnarray}
Combining~\eqref{SO},~\eqref{FTT1}, and~\eqref{STT}, we arrive at the theorem. 
\end{proof}

\section*{Appendix D}
\begin{proof}[Proof of Theorem~\ref{Thm4}]

To analyze the performance of \CBE~policy, we establish the following three facts:

\emph{Fact 1.} The probability that the best arm is eliminated at a step $n$ by a suboptimal arm is upper bounded by $\frac{K}{T^4}$: for $k\neq*$,
\begin{eqnarray}\nn
&&\hspace{-3em}\Pr\left[\MVbar_{*}[u_n]- \frac{\Gahat_n}{4}> \min_{j\in\K_n}\MVbar_{j}[u_n] +\frac{\Gahat_n}{4}\right] \\\nn
&\le& \sum_{k\in\K_n\setminus k} \Pr\left[\MVbar_{*}[u_n]- \frac{\Gahat_n}{4}> \MVbar_{k}[u_n] +\frac{\Gahat_n}{4}\right] \\\nn
&\le& \sum_{k\in\K_n\setminus k} \Pr\left[\MVbar_{*}[u_n]- \MV_* > \frac{\Gahat_n}{4}~\text{or}~\MVbar_{k}[u_n] -\MV_k<-\frac{\Gahat_n}{4}\right] \\\nn
&\le& \Pr\left[\MVbar_{*}[u_n]- \MV_* > \frac{\Gahat_n}{4}\right] + \sum_{k\in\K_n\setminus k} \Pr\left[\MVbar_{k}[u_n] -\MV_k<-\frac{\Gahat_n}{4}\right] \\\nn
&\le& K\exp\left(-\frac{\alpha u_n\Gahat^2}{16} \right)\\ \nn
&\le&\frac{K}{T^4}.
\end{eqnarray}

\emph{Fact 2.} Conditioned on the probability that the optimal arm is not eliminated, the probability that a suboptimal arm $k$ is not eliminated at step $n$ where $\Gahat_n<\Gamma_k$
is also upper bounded by $\frac{2}{T^4}$: for $k\neq*$,
\begin{eqnarray}\nn
&&\hspace{-3em}\Pr\left[\MVbar_{k}[u_n]- \frac{\Gahat_n}{4}< \min_{j\in\K_n}\MVbar_{j}[u_n] +\frac{\Gahat_n}{4}\right] \\\nn
&\le& \Pr\left[\MVbar_{k}[u_n]- \frac{\Gahat_n}{4}< \MVbar_{*}[u_n] +\frac{\Gahat_n}{4}\right] \\\nn
&\le& \Pr\left[\MVbar_{*}[u_n]- \MV_* > \frac{\Gahat_n}{4}~\text{or}~\MVbar_{k}[u_n] -\MV_k<-\frac{\Gahat_n}{4}\right] \\\nn
&\le& \Pr\left[\MVbar_{*}[u_n]- \MV_* > \frac{\Gahat_n}{4}\right] +  \Pr\left[\MVbar_{k}[u_n] -\MV_k<-\frac{\Gahat_n}{4}\right] \\\nn
&\le& 2\exp\left(-\frac{\alpha u_n\Gahat^2}{16} \right) \\ \nn
&\le&\frac{2}{T^4}.
\end{eqnarray}

\emph{Fact 3.} Conditioned on the probability that the optimal arm is not eliminated, the probability that a suboptimal arm is eliminated at a step $n$ where $\Gahat_n>4\Gamma_k$
is upper bounded by $\frac{K}{T}$: for $k\neq*$,
\begin{eqnarray}\nn
&&\hspace{-3em}\Pr\left[\MVbar_{k}[u_n]- \frac{\Gahat_n}{4}> \min_{j\in\K_n}\MVbar_{j}[u_n] +\frac{\Gahat_n}{4}\right] \\\nn
&\le& \sum_{j\in\K_n\setminus k} \Pr\left[\MVbar_{k}[u_n]- \frac{\Gahat_n}{4}> \MVbar_{j}[u_n] +\frac{\Gahat_n}{4}\right] \\\nn
&\le& \sum_{j\in\K_n\setminus k} \Pr\left[\MVbar_{k}[u_n]- \MV_k > \frac{\Gahat_n}{4}~\text{or}~\MVbar_{j}[u_n] -\MV_j<-\frac{\Gahat_n}{4}\right] \\\nn
&\le& \Pr\left[\MVbar_{k}[u_n]- \MV_k > \frac{\Gahat_n}{8}\right] + \sum_{j\in\K_n\setminus k} \Pr\left[\MVbar_{j}[u_n] -\MV_j<-\frac{\Gahat_n}{8}\right] \\\nn
&\le& K\exp\left(-\frac{\alpha u_n\Gahat^2}{64} \right) \\ \nn
&\le&\frac{K}{T}.
\end{eqnarray}

Let $n_k=\min\{n:\Gahat_n\le\Gamma_k\}$ and let $n_{\max}$ be the total number of steps at time $T$. Clearly, $n_{\max}\le \log_2 T$. Using Facts 1 and 2, we have, for $k\neq*$,
\begin{eqnarray}\nn
\E[\tau_{k,T}]&\le& 
\sum_{m=1}^{n_k} u_n + (\Pr[\text{the best arm is eliminated by a suboptimal arm}])T\\\nn 
&+& (\Pr[\text{arm $k$ is not eliminated at (or before) step $n_k$}])T\\\nn
&\le&\sum_{m=1}^{n_k}(\frac{C\log T}{\Gahat_m^2}+1) + \frac{Kn_{\max}+2}{T^3} \\\nn
&\le& \sum_{m=1}^{n_k}\frac{C\log T}{\Gamma_k^2}(\frac{1}{4}^{m-1} + 1) + \frac{K\log_2T+2}{T^3}\\\nn
&\le&
\frac{\frac{4C}{3}\log T}{\Gamma_k^2} + \log_2(\frac{1}{\Gamma_k})+ \frac{K\log_2T+2}{T^3}.
\end{eqnarray}

The first term in regret expression given in~\eqref{RegUB} is thus upper bounded by

\begin{eqnarray}\label{sh1}
\Sumnotkstar (\frac{\frac{4C}{3}\log T}{\Gamma_k^2}+\log_2(\frac{1}{\Gamma_k}) +\frac{K\log_2T+2}{T^3}\wedge T) \Gamma_k
\end{eqnarray}

The second term of the upper bound on regret given in~\eqref{SO} is upper bounded by
\begin{eqnarray}\nn
&&\hspace{-2em}(K-1)\Sumnotkstar\sum_{t=1}^T  \{\Pr[\pi_{t}=k]\wedge\frac{1}{4}\}\Delta_k^2\\\nn &\le& 
(K-1)\Sumnotkstar u_{n_k-1}\I[n_k-1\le n_{\max}](n_{\max}-n_k+1)\frac{2\Delta_{\max}^2}{4}
\\\nn
&+&
(K-1)\Sumnotkstar u_{n_k21}\I[n_k-2\le n_{\max}](n_{\max}-n_k+2)\frac{2\Delta_{\max}^2}{4}\\\nn
&+&
\bigg(\Pr[\text{the best arm is eliminated by a suboptimal arm}]\\\nn 
&+& 
\Pr[\text{arm $k$ is not eliminated at (or before) step $n_k$}] \\ \nn 
&+&
\Pr[\text{arm $k$ is eliminated at (or before) step $n_k-3$}]
\bigg)\frac{T\Delta_{\max}^2}{4}\\\nn
&=&
\frac{1}{2}\log_2T\Delta_{max}^2\Sumnotkstar\bigg( \bigg(\frac{C\log T}{\Gamma_k^2}+1\bigg)\I[n_k\le n_{\max}]\\\nn
&+&
\bigg(\frac{\frac{C}{4}\log T}{\Gamma_k^2}+1\bigg)\I[n_k-1\le n_{\max}]\bigg)\\\label{sh2}
&&~~~+(\frac{K\log_2 T+2}{T^4}+\frac{K\log_2 T}{T})(\frac{(K-1)^2T\Delta_{\max}^2}{4}).
\end{eqnarray}
Combining~\eqref{sh1} and~\eqref{sh2}, we arrive at the theorem. 

\end{proof}


\begin{thebibliography}{}


\bibitem{Price}
V. Dani, T. P. Hayes, S. M. Kakade, ``The Price of Bandit Information for Online Optimization'', {\em In Proceedings of NIPS}, 2007. 

\bibitem{Mourtada}
Jaouad Mourtada, Stéphane Gaïffas, ``Anytime Hedge achieves optimal regret in the stochastic regime,'' available at {\em  	arXiv:1809.01382 [stat.ML]}, 2018.


\bibitem{PLG}
N. Cesa-Bianchi and G. Lugosi,
``Prediction, Learning, and Games''
New York, NY, USA: {\em Cambridge University Press,}
2006.


\bibitem{War14}
M. K. Warmuth, W. M. Koolen, ``Open Problem: Shifting Experts on Easy Data'', {\em JMLR: Workshop and Conference Proceedings}, vol 35, pp. 1295-1298, 2014.

\bibitem{Lai&Robbins85AAM}
T. Lai, H. Robbins, ``Asymptotically Efficient Adaptive
Allocation Rules,'' {\em Advances in Applied Mathematics}, vol.~6,
no.~1, pp.~4-22, 1985.



\bibitem{BR}
S. Bubeck, V. Perchet, P. Rigollet, ``Bounded Regret in Stochastic Multi-armed Bandits,'' available at {\em http://arxiv.org/abs/1302.1611}, 2013.


\bibitem{NonSto}
P. Auer, N. Cesa-Bianchi, Y. Freund, R. E. Schapire, ``The Non-stochastic Multi-armed Bandit Problem,'' {\em SIAM Journal on Computing}, Vol. 32, pp.~48-77, 2003.


\bibitem{Auer&etal02ML}
P.~Auer, N.~Cesa-Bianchi, P.~Fischer, ``Finite-time Analysis of the
Multiarmed Bandit Problem,'' {\em Machine Learning}, vol.~47,
pp.~235-256, 2002.

\bibitem{DSEE}
S. Vakili, K. Liu, Q. Zhao, ``Deterministic Sequencing of Exploration and Exploitation for Multi-Armed Bandit Problems,'' {\em IEEE Journal of Selected Topics in Signal Processing}, vol. 7, no. 5, pp.~759 - 767, 2013.

\bibitem{Finance}
M. C. Steinbach, ``Markowitz Revisited: Mean-variance Models in Financial Portfolio Analysis,'' {\em SIAM Review} vol.~43, no.~1, pp.~31-85, 2001.






\bibitem{Marko}
H. Markowitz, ``Portfolio selection,'' {\em The Journal of Finance}, vol.~7, no.~1 pp.~77-91, 1952.





\bibitem{UCBRev}
P. Auer, R. Ortner, ``UCB revisited: Improved Regret Bounds for the Stochastic Multi-armed Bandit Problem,'' {\em Periodica Mathematica Hungarica}, vol.~61, no.~1-2, pp.~55-65, September, 2010.





\bibitem{Thompson33}
W. Thompson, ``On the likelihood that one unknown
probability exceeds another in view of the evidence of two
samples,'' {\em Biometrika 25}, pp. 285–294, 1933.










\bibitem{Sani}
A. Sani, A Lazaric, R Munos, ``Risk Aversion in Multi-armed Bandits,'' {\em Neural Information Processing Systems (NIPS)}, 2012.

\bibitem{Vakili2015}
S. Vakili, Q. Zhao, ``Risk-Averse Multi-Armed Bandit Problems under Mean-Variance Measure,'' {\em IEEE Journal of Selected Topics in Signal Processing (JSTSP): Special Issue on Financial Signal Processing and Machine Learning for Electronic Trading}, vol. 10, no. 6, pp. 1093-1111, 2016.







\bibitem{VZAllerton15}
S. Vakili, Q. Zhao, ``Mean-Variance and Value at Risk in Multi-Armed Bandit Problems,'' {\em 53rd Annual Allerton Conference on Communication, Control, and Computing}, 2015.


\bibitem{GRAMAB}
A. Zimin, R. Ibsen-Jensen, K. Chatterjee, ``Generalized Risk-Aversion in Stochastic Multi-Armed Bandits,'' available at {\em http://arxiv.org/abs/1405.0833}, 2014.

\bibitem{Dima}
M. K. Warmuth, D. Kuzmin, ``Online variance minimization,'' in {\em
Algorithmic Learning Theory. COLT}, 2006.

\bibitem{Safety}
N. Galichet, M. Sebag, O. Teytaud, ``Exploration vs Exploitation vs Safety: Risk-averse Multi-Armed Bandits,'' {\em Asian Conference on Machine Learning}, 2013.












\bibitem{Mill}
O. Maillard, ``Robsut Risk-Averse Stochastic Multi-armed Bandits,'' {\em Algorithmic Learning Theory}, vol 8139, pp. 218-233.


\bibitem{Even}
E. Even-Dar, M. Kearns, J. Wortman, ``Risk-sensitive Online Learning,'' {\em 17th international conference on Algorithmic Learning Theory (ALT-06)}, pp.~199-213, 2006.







\bibitem{SubG}
R. G. Antonioni, Y. Kozachenko, A. Volodin, ``Convergence of series of dependent $\phi$-subgaussian random variables,'' {\em Journal of Mathematical Analysis and Applications}, vol. 338, no. 2, pp. 1188-1203, 2008. 




\bibitem{Bubeck13}
S. Bubeck, N. Cesa-Bianchi, and G. Lugosi, ``Bandits
with heavy tail,'' {\em 
IEEE Transactions on Information Theory}, vol 59, pp. 7711–7717, 2013. 







\end{thebibliography}
\end{document}